\newtheorem{theorem}{Theorem}[section]
\newtheorem{lemma}{Lemma}[section]
\pgfplotsset{compat=1.5}
\newcommand{\ie}{\textit{i.e.}\xspace}
\newcommand{\eg}{\textit{e.g.}\xspace}
\newcommand{\method}{\textsc{ARIA}\xspace}
\newcommand{\w}{\textit{w/}\xspace}
\newcommand{\score}{\emph{SplitScore}\xspace}
\def\adl@drawiv#1#2#3{%
        \hskip.5\tabcolsep
        \xleaders#3{#2.5\@tempdimb #1{1}#2.5\@tempdimb}%
                #2\z@ plus1fil minus1fil\relax
        \hskip.5\tabcolsep}
\newcommand{\cdashlinelr}[1]{%
  \noalign{\vskip\aboverulesep
           \global\let\@dashdrawstore\adl@draw
           \global\let\adl@draw\adl@drawiv}
  \cdashline{#1}
  \noalign{\global\let\adl@draw\@dashdrawstore
           \vskip\belowrulesep}}
\title{\method: Training Language Agents with Intention-Driven Reward Aggregation}
\author{
 Ruihan Yang$^\heartsuit$\footnotemark[1]\thanks{Equal Contribution.}, 
 Yikai Zhang$^\heartsuit$\footnotemark[1] 
\footnotemark[2]\thanks{Corresponding authors.}, 
 Aili Chen$^\heartsuit$,
 Xintao Wang$^\heartsuit$, \\
 \textbf{Siyu Yuan}$^\heartsuit$, 
 \textbf{Jiangjie Chen}$^\spadesuit$,
 \textbf{Deqing Yang}$^\heartsuit$\footnotemark[2], 
 \textbf{Yanghua Xiao}$^\heartsuit$ \\
 $^\heartsuit$Fudan University \quad $^\spadesuit$Bytedance Seed\\
    \small \texttt{\{rhyang21,ykzhang22\}@m.fudan.edu.cn} \\
    \quad \\
    Project Page: \url{https://aria-agent.github.io}
}
\definecolor{Gray}{gray}{0.92}
\definecolor{racing-green}{rgb}{0.0, 0.8, 0.6}
\definecolor{awesome-red}{rgb}{1.0, 0.13, 0.32}
\definecolor{LightCyan}{rgb}{0.88,1,1}
\definecolor{darkgreen}{RGB}{0,150,0}
\definecolor{Ground}{RGB}{255,184,55}
\definecolor{Dirt}{RGB}{191,169,115}
\definecolor{Pink}{RGB}{226,184,176}
\definecolor{Violet}{RGB}{163,148,170}
\definecolor{darkred}{RGB}{150,0,0} %
\definecolor{level4}{RGB}{110,136,203}
\definecolor{level3}{RGB}{173,190,226}
\definecolor{level2}{RGB}{205,208,243}
\definecolor{level1}{RGB}{236,236,252}
\definecolor{lightblue}{RGB}{130,169,217}
\definecolor{green}{RGB}{29,177,0} 
\newcolumntype{g}{>{\columncolor{Ground!7}}c}
\newcolumntype{d}{>{\columncolor{cyan!6}}c}
\newcolumntype{f}{>{\columncolor{lime!6}}c}
\newcolumntype{v}{>{\columncolor{purple!6}}c}
\newtheorem{definition}{Definition}
\newcommand{\Var}{\mathrm{Var}}
\begin{document}

\maketitle

\begin{abstract}
Large language models (LLMs) have enabled agents to perform complex reasoning and decision-making through free-form language interactions.
However, in open-ended language action environments (\eg, negotiation or question-asking games), the action space can be formulated as a joint distribution over tokens, resulting in an exponentially large action space.
Sampling actions in such a space can lead to extreme reward sparsity, which brings large reward variance, hindering effective reinforcement learning (RL). 
To address this, we propose \method, a method that \textbf{A}ggregates \textbf{R}ewards in \textbf{I}ntention space to enable efficient and effective language \textbf{A}gents training.
\method aims to project natural language actions from the high-dimensional joint token distribution space into a low-dimensional intention space, where semantically similar actions are clustered and assigned shared rewards. 
This intention-aware reward aggregation reduces reward variance by densifying reward signals, fostering better policy optimization.
Extensive experiments demonstrate that \method not only significantly reduces policy gradient variance, but also delivers substantial performance gains of an average of 9.95\% across four downstream tasks, consistently outperforming offline and online RL baselines.

\end{abstract}

\section{Introduction}

Large language models (LLMs) have demonstrated strong capabilities in text comprehension and generation, enabling the development of autonomous agents that operate through natural language, commonly referred to as language agents~\cite{wang2024survey,luo2025large,sumers2023cognitive}.
Language agents are increasingly expected to interact with environments through language-driven actions to accomplish diverse tasks, such as web navigation~\cite{gur2023real,zhou2023webarena}, text-based games~\cite{wang2022scienceworld,zhang2024timearena,abdulhai2023lmrl}, and negotiation~\cite{davidson2024evaluating,bianchi2024well}.
These tasks often require long-horizon planning and reasoning to achieve high-level goals, posing significant challenges for current language agents~\cite{majumder2023clin,song2024trial,xiong2024watch,putta2024agent,yang2025selfgoal}.
According to the structure of the action space, language agent tasks can be broadly categorized into \textit{constrained action space tasks} and \textit{open-ended language action tasks}.
The former requires agents to perform actions from a predefined, discrete, and verifiable action set, where language serves as a template or command interface to structured environments~\cite{shridhar2020alfworld,yao2023webshopscalablerealworldweb}.
In contrast, the action space of \textit{open-ended language action tasks} comprises free-form natural language utterances without strict validity constraints~\cite{lewis2017deal,shapira2024glee}.
These tasks introduce unique challenges:
\begin{inparaenum}[\it 1)]
\item 
Agents must generate diverse, context-sensitive language actions that dynamically influence other agents or the environment.
\item 
The open-endedness of language actions gives rise to a vast, unstructured, and highly strategic action space, requiring agents to reason, adapt, and optimize beyond fixed patterns.
\end{inparaenum}
Given these challenges, we pose the following research question: \textit{\textbf{How can we enhance the performance of language agents in open-ended language action tasks?}}

Reinforcement learning (RL) is widely used to enhance language agents in complex tasks by enabling them to learn through interaction and feedback~\cite{zhou2024archer,zhou2025sweet}. However, in open-ended language action settings, RL faces serious challenges due to \textbf{extremely sparse rewards} caused by \textbf{exponentially large action space}, where actions are represented as token sequences. 
Given a vocabulary of size $\mathcal{V}$ and an average sequence length $\mathcal{L}$, the action space scales as $\mathcal{V}^\mathcal{L}$, resulting in a combinatorial and exponential explosion.
Existing methods directly assign environmental rewards by averaging or decaying.
Yet these are inadequate for open-ended tasks, where sampling-based methods such as PPO~\cite{schulman2017proximal} and REINFORCE~\cite{Williams1992SimpleSG} must search a vast, unstructured space under sparse and delayed rewards. This leads to \textbf{high variance in reward estimation} and \textbf{inefficient policy optimization}.

To address these challenges, we propose \textbf{semantic projection}, which projects actions from the high-dimensional token space into a low-dimensional intention space, enabling reward aggregation across semantically equivalent actions.
LLM agents' actions often reflect underlying intentions, which are far fewer than token combinations.
For example, the utterances ``\textit{I will concede first in order to encourage my opponent to compromise}'' and ``\textit{By taking the initiative to compromise, I aim to prompt my counterpart to do the same.}'' convey the same intention of prompting compromise through concession.
By grouping such actions under shared intentions, we reduce the action space from $\mathcal{V^L}$ to intention space $\mathcal{C}$, where $|\mathcal{C}| \ll |\mathcal{V^L}|$.
This transformation reduces variance by densifying sparse rewards, and facilitates more efficient policy optimization.

Building on semantic projection, we propose \method, a method that \textbf{A}ggregates \textbf{R}ewards in \textbf{I}ntention space for efficient training of language \textbf{A}gents. 
\method maps natural language actions into a task-specific intention space via \textbf{semantic projection}, enabling reward aggregation across semantically similar actions to stabilize and improve policy learning.
To automatically construct  the intention space $\mathcal{C}$, \method applies hierarchical clustering~\cite{ward1963hierarchical} over sentence embeddings and adaptively adjusts the clustering granularity. It then aggregates rewards for actions sharing similar intentions and uses REINFORCE~\cite{Williams1992SimpleSG} to optimize the policy over this compressed space.
We evaluate \method on four language action tasks, including two single-agent games (\textit{Guess My City}, \textit{20 Questions}) and two adversarial games (\textit{Negotiation}, \textit{Bargaining}). Experimental results show that: \begin{inparaenum}[\it 1)]
\item \method significantly reduces reward variance, enabling stable training and improved policy gradient efficiency;
\item It consistently outperforms offline and online RL baselines, achieving an average improvement of 9.95\% across all tasks.
\end{inparaenum}

In summary, our key contributions are as follows:
\begin{inparaenum}[\it 1)]
\item 
We propose the operation of \textbf{semantic projection}, which projects actions from the high-dimensional token sequence space into a compact intention space, effectively mitigating reward sparsity in free-form language action tasks;
\item 
Built upon \textbf{semantic projection}, we design \method, a principled approach for training language agents with intention-driven reward aggregation;
\item 
We conduct extensive experiments on both single-agent and adversarial tasks, showing that \method reduces reward variance, accelerates convergence, and outperforms existing offline and online RL baselines.
\end{inparaenum}

\section{Related Work}

\paragraph{Natural Language Agent Benchmark}
Recent studies have introduced evaluation tasks for language agents requiring long-horizon planning and strategic reasoning in multi-turn, goal-driven settings, including social conversations~\cite{zhou2023sotopia}, strategy games (\eg, \textit{Werewolf}\cite{ye2025multi}, \textit{Avalon}\cite{light2023avalonbench}), economics-based scenarios (\eg, \textit{bargaining}\cite{lewis2017deal,shapira2024glee}, \textit{negotiation}\cite{shapira2024glee}), and text-based games (\eg, \textit{Taboo}\cite{cheng2024self}, \textit{Guess My City}\cite{abdulhai2023lmrl}, \textit{20 Questions}\cite{abdulhai2023lmrl}, \textit{Ask-Guess}\cite{qiao2023gameeval}).
In this work, we focus on text-based games (\textit{Guess My City}, \textit{20 Questions}) and adversarial tasks (\textit{Bargaining}, \textit{Negotiation}). These settings require dynamic strategy adaptation, balancing short- and long-term goals, and complex reasoning, offering challenging benchmarks for evaluating LLM agents' planning and decision-making.

\paragraph{Semantic Clustering}
Semantic clustering partitions samples into categories based on semantic similarity, typically by first extracting representations (\eg, embeddings), then applying clustering algorithms such as $k$-means~\cite{lloyd1982least}, hierarchical clustering~\cite{ward1963hierarchical}, or DBSCAN~\cite{ester1996density}.
In \method, actions are embedded and clustered into intentions using hierarchical clustering, which offers flexible post hoc granularity control and captures hierarchical semantic relations for coarse-to-fine strategy modeling.

\paragraph{Training Language Agent with Reinforcement Learning}
Language agents often face ambiguous goals and sparse rewards, requiring adaptive long-term planning~\cite{wang2024survey,luo2025large,sumers2023cognitive}, which challenges decision-making.
Reinforcement learning (RL) provides a principled framework to address these challenges, with existing methods falling into two categories: offline methods~\cite{song2024trial,cheng2024self,xiong2024watch,putta2024agent,ye2025multi,bi2025prism}, which pre-collect trajectories and apply post-processing (e.g., DPO~\cite{rafailov2023direct}, KTO~\cite{ethayarajh2024kto}); and online methods~\cite{schulman2017proximal,zhou2024archer,shao2024deepseekmath,bi2025cot}, which alternate between sampling and policy updates.
However,  the high-dimensional action space in free-form language tasks exacerbates reward sparsity and variance, hindering RL training.
To mitigate this, we adopt an offline RL setup with reward aggregation and REINFORCE~\cite{Williams1992SimpleSG}, improving learning stability and efficiency.



\section{Method}

We present an overview of \method in Figure~\ref{fig:main}. First, we construct the intention space using semantic clustering (\S\ref{sec:intention_space}), where the optimal granularity is determined by Reward-Oriented Granularity Selection (\S\ref{sec:split_score}). Next, high-dimensional actions and observations are projected into the intention space through \textbf{semantic projection}, enabling reward aggregation (\S\ref{sec:reward_aggre}). Finally, the aggregated rewards are used to optimize the policy efficiently via offline REINFORCE (\S\ref{sec:offline_reinforce}).
\begin{figure}[t!]
    \centering    \includegraphics[width=1\linewidth]{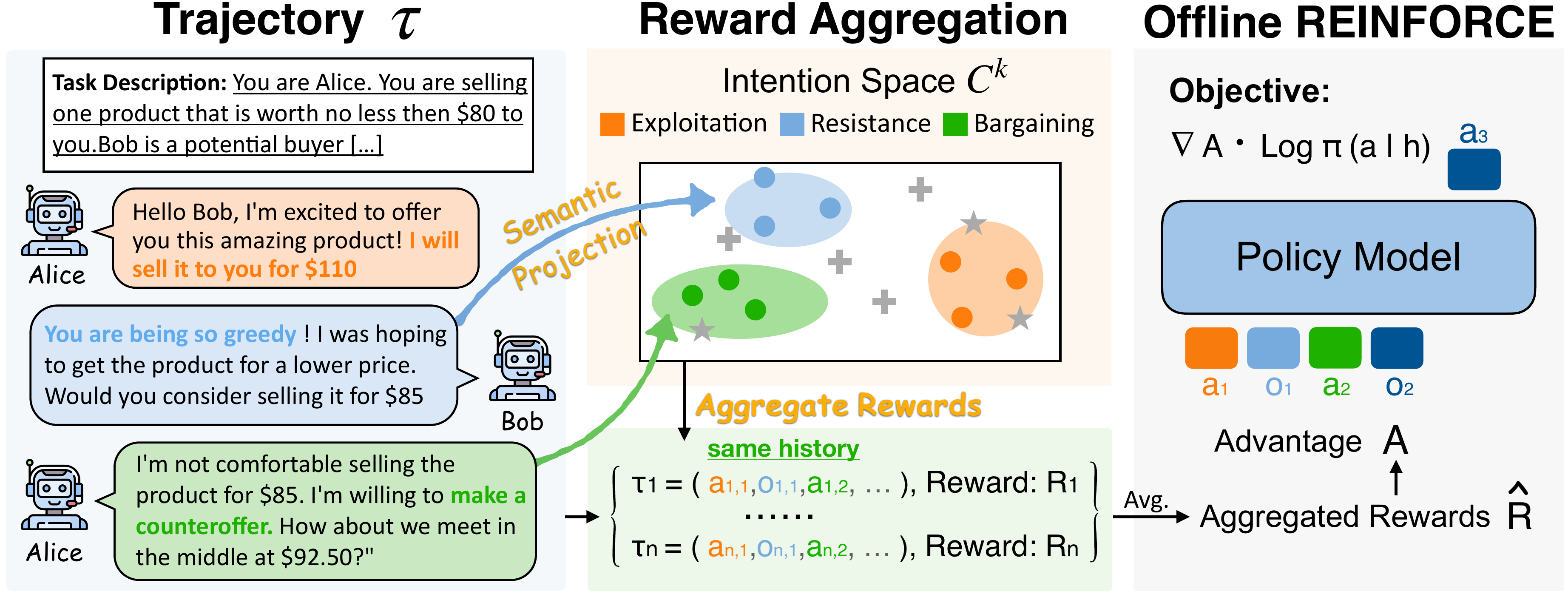}
    \caption{Illustration of \method. \method first lets agents interact to collect trajectories. Then it performs semantic projection and aggregates reward in the intention space, and finally updates the policy using the aggregated rewards.}
    \label{fig:main}
\end{figure}

\subsection{Task Formulation}
In this paper, we select two types of open-ended language action tasks, single-agent and two-agent adversarial games, as the testbed.
We formulate the tasks as a partially observable Markov decision process (POMDP) $\mathcal{M} = (\mathcal{S}, \mathcal{A}, \mathcal{O}, \mathcal{T}, \mathcal{R}, \gamma)$, where $\mathcal{S}$ is the global state, $\mathcal{A}$ is the action space of natural language actions, $\mathcal{O}$ is the observation, $\mathcal{T}$ is the transition function, $\mathcal{R}$ is the reward function, and $\gamma$ is the discount factor.
In the \textbf{single-agent} setting, an agent \(\mathcal{P}\) interacts with the environment by performing actions over time. 
At each step \( t \), the agent receives an observation \( o_t \) under state \( s_t \) and maintains a history \( h_t = \{o_1, a_1, \dots, o_{t-1}, a_{t-1}, o_t\}\). 
The agent then selects an action \( a_t \sim \pi_\theta(\cdot \mid h_t) \) conditioned on this history. The state \( s_t \) subsequently transitions to \( s_{t+1} \) according to the transition function \( \mathcal{T}: \mathcal{S} \times \mathcal{A} \to \mathcal{S} \). 
When \( s_t \) reaches the terminal condition, the environment returns a reward \( \mathcal{R} \). 
The objective of the agent is to maximize the expected cumulative reward at the end of the episode based on the policy \(\pi_\theta\).
In the \textbf{adversarial} setting, two players \( \mathcal{P} \in \{\mathcal{P}_1, \mathcal{P}_2\} \) take turns performing actions. In state \( s_t \), player \( \mathcal{P}_i \) selects an action \( a_i \sim \pi_i(\cdot \mid h_t) \), where \( h_t = \{o_1, a_1, \dots, o_{t-1}, a_{t-1}, o_t\} \) is the history of observations and actions, and \( o_t \) is derived from the state \( s_t \) and the opponent's action \( a_t \). The state \( s_t \) then transitions to \( s_{t+1} \) according to the transition function \( \mathcal{T}: \mathcal{S} \times \mathcal{A} \rightarrow \mathcal{S} \). When the terminal condition is met in \( s_t \), the environment returns a reward \( R \) to each player. Each player \( \mathcal{P}_i \) aims to maximize the expected reward by the end of the episode based on their policy \( \pi_i \).

\subsection{Intention Space Construction} 
\label{sec:intention_space}

We construct a latent intention space using clustering. Given the action space $\mathcal{A}$ and observation space $\mathcal{O}$, each element $x \in \mathcal{A} \cup \mathcal{O}$ is embedded into a semantic vector using a pre-trained encoder $\phi: \mathcal{A} \cup \mathcal{O} \rightarrow \mathbb{R}^d$. We apply hierarchical agglomerative clustering~\cite{murtagh2011methods} to partition the embedding space into $k$ clusters, forming the intention space $\mathcal{C}^{k}$ (see Appendix~\ref{app:hac} for details). The number of clusters $k$ is selected via reward-oriented granularity selection (\S\ref{sec:split_score}).

\subsection{Reward Aggregation}
\label{sec:reward_aggre}
Based on the intention space $\mathcal{C}^{k}$, we define a clustering function $c_k: \mathcal{A} \cup \mathcal{O} \rightarrow [k]$ that maps each element to a cluster index. At each step \( t \), the action and observation are mapped to cluster labels \( \tilde{a}_t = c_k(a_t) \) and \( \tilde{o}_t = c_k(o_t) \), respectively. Given the history ${h}_t = \{a_1, o_1, \dots, a_{t-1}, o_{t-1}\}$, the corresponding label sequence is
$$\tilde{h}_t = \{c_k(a_1), c_k(o_1), \dots, c_k(a_{t-1}), c_k(o_{t-1})\}.$$
We aggregate rewards across history-action pairs that share the same semantic intention. The trajectory reward $R$ is assigned to intermediate steps using temporal discounting: $R(h_t, a_t) = \gamma^{T-t} R$, where $\gamma$ is the discount factor.
For each intention pair $(\tilde{h}, \tilde{a})$, we compute the aggregated return by averaging over all history-action pairs that map to it:
$$
\tilde{R}^{(k)}(\tilde{h}, \tilde{a}) = \frac{1}{|\mathcal{S}_{\tilde{h}, \tilde{a}}|} \sum_{(h_t, a_t) \in \mathcal{S}_{\tilde{h}, \tilde{a}}} R(h_t, a_t),$$
where $\mathcal{S}_{\tilde{h}, \tilde{a}} = \{(h_t, a_t) : c_k(h_t) = \tilde{h},\ c_k(a_t) = \tilde{a}\}$ denotes the set of history-action pairs associated with intention $(\tilde{h}, \tilde{a})$.
The aggregated return $\tilde{R}^{(k)}(\tilde{h}_t, \tilde{a}_t)$ is used as the advantage estimate $\tilde{A}(h_t, a_t)$ for policy optimization.

\subsection{Reward-Oriented Granularity Selection}
\label{sec:split_score}
\begin{wrapfigure}{r}{0.45\textwidth}
  \centering
    \vspace{-5pt}
\includegraphics[width=0.45\textwidth]{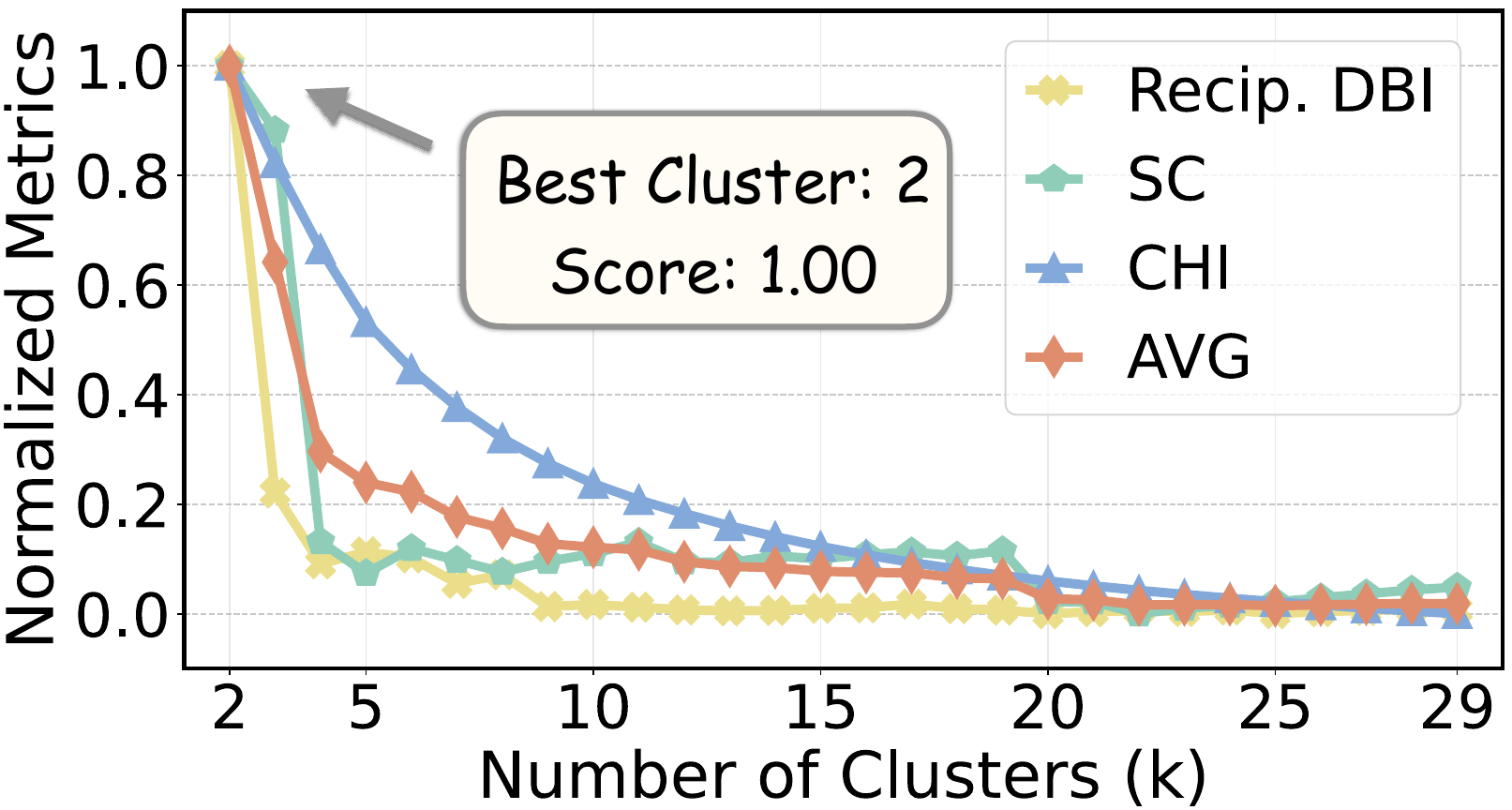}
  \caption{Clustering quality measured by SC, CHI, the reciprocal DBI and the average of three metrics. After normalization and averaging, $k=2$ achieves the highest overall score.}
  \label{fig:trad_metric}
  \vspace{-8pt}
\end{wrapfigure}
Semantic clustering helps compress the free-form, unstructured space of natural language actions and observations. 
However, selecting the appropriate granularity \( k \) remains challenging. For example, in the context of negotiation, we compute standard clustering metrics—Silhouette Score~\citep{rousseeuw1987silhouettes}, Calinski–Harabasz Index~\citep{calinski1974dendrite}, and Davies–Bouldin Index~\citep{davies2009cluster}—across different configurations.
In Figure~\ref{fig:trad_metric}, these metrics tend to favor overly coarse groupings due to the high similarity among actions, overlooking fine-grained distinctions that are critical for our task (see details of metric calculations in Appendix~\ref{app:clsu_metric}).

To address this, we propose a reward-oriented granularity selection mechanism that assesses whether further splitting clusters yields meaningful reward change. 
Unlike traditional metrics based on geometric structure (\ie, distance in embedding space), our method aligns with the RL objective by directly evaluating the impact on reward aggregation.

\paragraph{\score}
Let \( k \in [2, K] \) denote all possible granularity levels. We use \score to select the optimal granularity \( k^{*} \), defined as $
\score(k)\!= \!\frac{\delta_k}{|\mathcal{D}|},$
where \( \delta_k = \sum_{(h_t,a_t) \in \mathcal{D}} \left| \tilde{R}^{(k+1)}(h_t,a_t) - \tilde{R}^{(k)}(h_t,a_t) \right|\) represents the reward change for all \( (h_t, a_t) \) pairs when the number of clusters changes from \( k \) to \( k+1 \), and \( \mathcal{D} \) is the collection of all \( (h_t, a_t) \) pairs.

\paragraph{Automatic Stopping Criterion}
To select the optimal granularity \( k^* \), we define an early stopping mechanism based on \score. Given a threshold \( \epsilon > 0 \) and a window size \( \tau \)\footnote{In this paper, we set \( \epsilon = 0.01 \) and \( \tau = 10 \). Ablation study on $\epsilon$ is provided in Appendix~\ref{app:threshold}.}, we stop splitting when \( \score(j) < \epsilon \) for all \( j \in [k, k + \tau] \) as \( k \) increases. The selected \( k \) is then taken as \( k^* \). 
We prove in Appendix~\ref{app:splitsocre} that \score is bounded above a monotonically decreasing function. 
When \score remains below the threshold, further splitting has minimal impact on \( \delta_k \), indicating that the rewards \( \tilde{R}^{(k)}(h_t, a_t) \) are nearly unchanged and do not significantly affect the training process. Thus, we select the smallest \( k \) that meets the stopping condition to realize better space compression.

\subsection{Offline REINFORCE with Aggregated Reward}
\label{sec:offline_reinforce}
We use the offline REINFORCE algorithm~\cite{Williams1992SimpleSG} to optimize the policy.
Formally, let $\pi_\theta(a \mid s)$ denote the policy parameterized by $\theta$ and assign the aggregated reward $\tilde{R}^{(k)}(\tilde{h}_t,\tilde{a}_t)$ to $\tilde{A}(h_t,a_t)$.
\method optimizes the model by maximizing the following objective:
\begin{equation*}
J(\theta) = \mathbb{E}_{\tau \sim \pi_\theta} \left[ \sum_{t=0}^T  \log \pi_\theta(a_t \mid h_t) \cdot \tilde{A}(h_t,a_t) \right].
\end{equation*}

\section{Theoretical Analysis}
\label{sec:theory}
In this section, we theoretically show that intention clustering-based aggregation of the rewards in \method can reduce the variance of the gradient descent while maintaining a small bound of bias, thus improving training stability and efficiency.

\subsection{Background}
Let $A(h_t, a_t)$ be the original advantage of $(h_t, a_t)$ and $c_k(x)$ be the cluster label assigned to instance $x \!\in \!\mathcal{A} \cup \mathcal{O}$ under the granularity $k$, we define $(\tilde{h}_t, \tilde{a}_t) = \{(c_k(a_1), c_k(o_1), \dots, c_k(a_{t-1}), c_k(o_{t-1}),\ c_k(a_t)\}$ and calculate the cluster-averaged reward for $(\tilde{h}_t, \tilde{a}_t)$ as $\tilde{R}(\tilde{h}_t, \tilde{a}_t) = \frac{1}{|\mathcal{D}|} \sum_{(\tilde{h}_t, \tilde{a}_t) \in \mathcal{D}} R(\tilde{h}_t, \tilde{a}_t)$, where $R(\tilde{h}_t, \tilde{a}_t)$ is the original reward of $(\tilde{h}_t, \tilde{a}_t)$.
Then we assign $\tilde{R}(\tilde{h}_t, \tilde{a}_t)$ to the advantage of $(h_t, a_t)$ as $\tilde{A}(h_t, a_t)$.

\subsection{Main Theorem}
We first establish that cluster-based aggregation reduces both the total variance of the policy gradient algorithm and the variance of the policy gradient. 
We give the following two lemmas.

\begin{lemma}
Let $\tilde{A}$ denote the aggregated advantage, then $\mathrm{Var}(\tilde{A}) \leq \mathrm{Var}(A).$
\end{lemma}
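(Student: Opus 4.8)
The plan is to view the aggregated advantage $\tilde{A}$ as a conditional expectation of the original advantage $A$ and then invoke the law of total variance. Concretely, let $(H,A)$ be the random history-action pair drawn according to the trajectory distribution (with temporal discounting folded into the reward as in Section~\ref{sec:reward_aggre}), and let $Z = (c_k(H), c_k(A))$ be its image in the intention space under the clustering map. The key observation is that, up to the empirical-averaging used in the paper, $\tilde{A}(H,A)$ is precisely $\mathbb{E}[A \mid Z]$: aggregation replaces each advantage by the mean of the advantages over all pairs landing in the same cluster cell. First I would make this identification rigorous (noting that the empirical cluster averages converge to, or in the finite-dataset setting are exactly, the conditional mean over the dataset distribution $\mathcal{D}$), so that $\tilde{A} = g(Z)$ for the function $g(z) = \mathbb{E}[A \mid Z = z]$.

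Second, I would apply the law of total variance to $A$ with respect to the sub-$\sigma$-algebra generated by $Z$:
\begin{equation*}
\Var(A) = \mathbb{E}\big[\Var(A \mid Z)\big] + \Var\big(\mathbb{E}[A \mid Z]\big).
\end{equation*}
Since $\mathbb{E}[\Var(A\mid Z)] \geq 0$ and $\Var(\mathbb{E}[A\mid Z]) = \Var(\tilde{A})$ by the identification from the first step, this immediately gives $\Var(\tilde{A}) \leq \Var(A)$, which is the claim. The inequality is strict unless $A$ is almost surely constant within each cluster, i.e.\ unless the clustering already perfectly separates advantage values — a remark worth including since it characterizes when aggregation actually helps.

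The main obstacle I anticipate is bookkeeping around what ``$\Var$'' means here and making the conditional-expectation identification airtight. The paper's notation defines $\tilde{R}(\tilde h_t,\tilde a_t)$ via an average over $\mathcal{D}$ with a slightly ambiguous index set, so I would need to fix the reference measure (the empirical distribution over all collected $(h_t,a_t)$ pairs, weighted as they arise in trajectories) and confirm that ``aggregated advantage equals cluster-conditional mean of advantage'' holds exactly under that measure — this is just the statement that the conditional expectation minimizes mean-squared error and equals the cell-wise average for a discrete conditioning variable. Once the measure is pinned down, the rest is the one-line total-variance decomposition; no delicate estimates are required. A secondary, purely expository point is to make sure the discounting $R(h_t,a_t)=\gamma^{T-t}R$ is treated as part of the definition of the random variable $A$ so that it does not interfere with the argument.
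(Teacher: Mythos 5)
Your proposal is correct and follows essentially the same route as the paper: identify the aggregated advantage $\tilde{A}$ with the cluster-conditional expectation $\mathbb{E}[A \mid C]$ and apply the law of total variance, discarding the nonnegative term $\mathbb{E}[\mathrm{Var}(A \mid C)]$. Your additional remarks on pinning down the reference measure and on when the inequality is strict go beyond the paper's one-line argument but do not change the substance.
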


\begin{lemma}
\label{lemma:4.2}
Given the single-sample policy gradient estimator $\nabla_\theta \log \pi_\theta(a \mid h) \, A(h, a)$, the variance is reduced when using the aggregated advantage $\tilde{A}$. Specifically, $\mathrm{Var}(\nabla_\theta \log \pi_\theta \cdot \tilde{A}) \leq \mathrm{Var}(\nabla_\theta \log \pi_\theta \cdot A).$
\end{lemma}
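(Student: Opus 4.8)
The plan is to obtain the inequality by conditioning on the cluster label and applying the law of total variance, in the same Rao--Blackwell spirit as the preceding lemma. Write $s \equiv \nabla_\theta \log \pi_\theta(a \mid h)$ for the score and let $Z = (c_k(h), c_k(a))$ denote the pair of cluster indices, so that by construction the aggregated advantage is the conditional expectation $\tilde{A}(h,a) = \mathbb{E}[A \mid Z]$, which is in particular $Z$-measurable. The one substantive ingredient I would invoke is the modeling assumption that, conditioned on $Z$, the score $s$ and the advantage $A$ are independent: once the intention cluster is fixed, the residual spread of $A$ is treated as environment stochasticity that the policy does not distinguish among the surface forms collapsed into that cluster. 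This is genuinely needed rather than cosmetic --- for a joint law in which $s$ and $A$ are strongly negatively associated one can make $\mathrm{Var}(s\tilde{A})$ strictly exceed $\mathrm{Var}(sA)$ --- so establishing or at least clearly positing this hypothesis is the conceptual heart of the proof, not the algebra that follows.

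Granting the hypothesis, the computation is short. By the law of total variance, $\mathrm{Var}(sA) = \mathbb{E}\!\left[\mathrm{Var}(sA \mid Z)\right] + \mathrm{Var}\!\left(\mathbb{E}[sA \mid Z]\right)$ and likewise for $s\tilde{A}$. Using $Z$-measurability of $\tilde{A}$ together with $s \perp A \mid Z$ gives $\mathbb{E}[sA \mid Z] = \mathbb{E}[s \mid Z]\,\mathbb{E}[A \mid Z] = \mathbb{E}[s \mid Z]\,\tilde{A} = \mathbb{E}[s\tilde{A} \mid Z]$, so the two between-cluster terms are identical and drop out. For the within-cluster terms, expanding the conditional second moments (again via conditional independence and the fact that $\tilde{A}$ is constant given $Z$) yields the clean identity $\mathrm{Var}(sA \mid Z) - \mathrm{Var}(s\tilde{A} \mid Z) = \mathbb{E}[s^2 \mid Z]\,\mathrm{Var}(A \mid Z)$. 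Taking expectation over $Z$ then gives $\mathrm{Var}(sA) - \mathrm{Var}(s\tilde{A}) = \mathbb{E}\!\left[s^2\,\mathrm{Var}(A \mid Z)\right] \ge 0$, which is exactly the claim, with equality precisely when $A$ is already constant within every cluster. When $s$ is a vector the same identity is applied coordinate-by-coordinate, equivalently to the trace of the gradient covariance.

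A slightly different route makes the dependence on the preceding lemma explicit: under unconditional independence of the score from the advantage and its cluster label, one has $\mathrm{Var}(sA) - \mathrm{Var}(s\tilde{A}) = \mathbb{E}[s^2]\big(\mathbb{E}[A^2] - \mathbb{E}[\tilde{A}^2]\big)$, and since aggregation preserves the mean ($\mathbb{E}[\tilde{A}] = \mathbb{E}[A]$) while the preceding lemma gives $\mathrm{Var}(\tilde{A}) \le \mathrm{Var}(A)$, we get $\mathbb{E}[\tilde{A}^2] \le \mathbb{E}[A^2]$ and hence non-negativity. I would present the conditional version as the main proof, since its hypothesis is the more defensible one, and keep this as a remark. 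Either way, the step I expect to be the real obstacle is not any line of algebra but pinning down the independence-type hypothesis: the bare inequality is false for arbitrary joint distributions of the score and the advantage, so the write-up must either assume conditional independence or instead bound the cross terms $\mathbb{E}[\mathrm{Cov}(s, A \mid Z)]$-type quantities and argue they are dominated by the within-cluster variance reduction $\mathbb{E}[s^2\,\mathrm{Var}(A\mid Z)]$.
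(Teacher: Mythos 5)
Your proof is correct and follows the same basic strategy as the paper's: condition on the cluster label, observe that $\tilde{A}=\mathbb{E}[A\mid Z]$ is constant within a cluster, and show that the second moment drops by exactly the within-cluster spread of $A$ weighted by $s^2$. The paper's own proof is terser: it writes the variance as $\mathbb{E}[s^2A^2]-(\mathbb{E}[sA])^2$ and asserts the single identity $\mathbb{E}[s^2A^2]-\mathbb{E}[s^2\tilde{A}^2]=\mathbb{E}\bigl[\mathbb{E}[s^2(A-\tilde{A})^2\mid C]\bigr]\ge 0$, then concludes. What you add, and what I think is the genuinely valuable part of your write-up, is the observation that this identity (and the silent cancellation of the $(\mathbb{E}[sA])^2$ versus $(\mathbb{E}[s\tilde{A}])^2$ terms, which the paper does not address at all) requires the cross term $\mathbb{E}[s^2 A\tilde{A}\mid C]$ to factor, i.e.\ it requires conditional uncorrelatedness of the score and the advantage given the cluster. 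The paper never states this hypothesis; your proof makes it explicit, notes correctly that the bare inequality is false for adversarial joint laws of $(s,A)$, and derives the clean gap $\mathrm{Var}(sA)-\mathrm{Var}(s\tilde{A})=\mathbb{E}[s^2\,\mathrm{Var}(A\mid Z)]$ under that hypothesis. So: same decomposition, but your version is the rigorous one --- the paper's proof as written has the gap you anticipated (it implicitly assumes $s\perp A\mid C$ without saying so and ignores the mean terms), and your remark about the unconditional-independence shortcut via the preceding lemma is a fine alternative to keep as a side note.
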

We leave the proof in Appendix~\ref{app:proof}. 
Building on Lemma~\ref{lemma:4.2}, we show that the variance reduction by aggregation improves the convergence properties of offline REINFORCE.

\begin{theorem}[Variance-Improved Convergence]
Given $N$ i.i.d.\ trajectories in train set, let
$\hat{g}\!=\!\frac{1}{N} \sum_{i=1}^{N} \sum_{t} \nabla_\theta \log \pi_\theta(a_t^i \mid h_t^i) \, \tilde{A}_t^i$
be an estimator of the true gradient $g$. Define $\sigma^2\!=\!\mathrm{Var}(\nabla_\theta \log \pi_\theta \cdot \tilde{A})$. Then, we have $\| \hat{g} - g \|_2\!=\!O\left( \frac{\sigma}{\sqrt{N}} \right).$
\end{theorem}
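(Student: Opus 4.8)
The plan is to treat this as a standard concentration statement for an i.i.d.\ average of vector-valued random variables, where the only quantity that needs care is the variance, which is exactly what Lemma~\ref{lemma:4.2} controls. First I would observe that $\hat g$ is an unbiased estimator of $g$ in the sense that $\mathbb{E}[\hat g] = g$, since each trajectory is drawn i.i.d.\ from $\pi_\theta$ and the per-trajectory term $\sum_t \nabla_\theta \log \pi_\theta(a_t^i \mid h_t^i)\,\tilde A_t^i$ has expectation $g$ by definition of the aggregated-advantage policy-gradient objective $J(\theta)$. Then $\hat g - g = \frac{1}{N}\sum_{i=1}^N (X_i - \mathbb{E}[X_i])$ where $X_i$ denotes the $i$-th per-trajectory gradient term, the $X_i$ are i.i.d., and $\mathbb{E}\|X_i - \mathbb{E}[X_i]\|_2^2$ is (up to summing the per-step covariances, or bounding the horizon $T$) of order $\sigma^2$ with $\sigma^2 = \mathrm{Var}(\nabla_\theta \log \pi_\theta \cdot \tilde A)$.

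Next I would compute $\mathbb{E}\|\hat g - g\|_2^2$. By independence of the $X_i$, the cross terms vanish, so
\[
\mathbb{E}\|\hat g - g\|_2^2 = \frac{1}{N^2}\sum_{i=1}^N \mathbb{E}\|X_i - \mathbb{E}[X_i]\|_2^2 = \frac{1}{N}\,\mathbb{E}\|X_1 - \mathbb{E}[X_1]\|_2^2 = \frac{\mathrm{tr}\,\mathrm{Cov}(X_1)}{N}.
\]
Writing $\mathrm{tr}\,\mathrm{Cov}(X_1) =: c\,\sigma^2$ (a finite constant depending on the horizon and the per-step correlation structure, which I would absorb into the $O(\cdot)$), this gives $\mathbb{E}\|\hat g - g\|_2^2 = O(\sigma^2/N)$. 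Finally, applying Jensen's inequality (or equivalently $\mathbb{E}\|Y\| \le \sqrt{\mathbb{E}\|Y\|^2}$) yields $\mathbb{E}\|\hat g - g\|_2 \le \sqrt{\mathbb{E}\|\hat g - g\|_2^2} = O(\sigma/\sqrt N)$, which is the claimed bound (interpreting the statement as a bound in expectation, or in probability via Markov's inequality). The variance reduction from Lemma~\ref{lemma:4.2} then enters by noting that this $\sigma$ is no larger than the corresponding quantity for the unaggregated advantage $A$, so aggregation tightens the constant in the convergence rate.

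The main obstacle — really a modeling subtlety rather than a hard computation — is handling the \emph{sum over $t$} inside each per-trajectory term: the steps within a trajectory are not independent, so $\mathrm{Cov}(X_1)$ is not simply $T$ times a single-step covariance, and one must either (i) invoke a uniform bound on the episode length $T$ and a bound on per-step gradient-times-advantage norms to control $\mathrm{tr}\,\mathrm{Cov}(X_1) = O(T^2\sigma^2)$ in the worst case, or (ii) adopt the (common in the policy-gradient literature) simplification of treating $\sigma^2$ as the variance of the \emph{full} per-trajectory estimator, in which case the bound is immediate. I would state explicitly which convention is used and fold the horizon dependence into the $O(\cdot)$ constant, since the theorem as phrased only tracks the $N$-dependence. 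The remaining steps — unbiasedness, vanishing cross terms, Jensen — are routine and I would not belabor them.
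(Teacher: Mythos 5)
Your proof is correct and follows essentially the same route as the paper's: unbiasedness plus trajectory independence gives $\mathbb{E}\|\hat g - g\|_2^2 = \sigma^2/N$, and Jensen's inequality converts this to the $O(\sigma/\sqrt N)$ bound. The ``modeling subtlety'' you flag about the sum over $t$ is real --- the paper resolves it exactly as your option (ii), by silently redefining $\sigma^2$ inside the proof to be the variance of the \emph{full} per-trajectory estimator $\sum_t \nabla_\theta \log \pi_\theta(a_t \mid h_t)\,\tilde A_t$ rather than the per-step quantity stated in the theorem --- so your explicit handling of this point is, if anything, more careful than the original.
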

\vspace{-15pt}
\begin{proof}
Let $g = \mathbb{E}[g_i]$ be the expected gradient for the $i$-th trajectory, where {\small$g_i\!=\!\sum_t \nabla_\theta \log \pi_\theta(a_t^i \mid h_t^i)\, \tilde{A}_t^i$} is the gradient estimator. The empirical gradient is {\small$\hat{g}\!=\!\frac{1}{N} \sum_{i=1}^N g_i$}. Let {\small$\sigma^2\!=\!\Var\left(\sum_t \nabla_\theta \log \pi_\theta(a_t \mid h_t) \cdot \tilde{A}_t\right)$}. By expectation linearity and trajectory independence, the variance of the empirical gradient is
{\small$\mathbb{E}\left[\|\hat{g} - g\|_2^2\right]\!=\!\frac{\sigma^2}{N}$}.
By Jensen’s inequality~\citep{Jensen1906SurLF}, we get
{\small$\mathbb{E}\left[\|\hat{g} - g\|_2\right] \!\leq\! \frac{\sigma}{\sqrt{N}}$}.
\end{proof}
\vspace{-8pt}
Intuitively, because clustering reduces $\sigma$, supposing we want $|\hat{g}-g|<\epsilon$, convergence to within 
$\epsilon$ requires fewer samples, or equivalently, enables the use of larger step sizes for the same error tolerance.
We then analyze the bias introduced by reward aggregation. 
To formalize this, we first give the notion of $\varepsilon$-bisimulation.

\begin{definition}[$\varepsilon$-Bisimulation]
Actions $a, a'$ are said to be $\varepsilon$-bisimilar if, for all states $s$,
$|r(h,a) - r(h,a')| \leq \varepsilon, \quad D_{\mathrm{TV}}\bigl(P(\cdot \mid h,a), P(\cdot \mid h,a')\bigr) \leq \varepsilon$, 
where the total variation divergence \( D_{\mathrm{TV}} \big(P(\cdot \mid h, a), P(\cdot \mid h, a') \big) \) measures how different the two distributions are over next states when different actions \( a \) and \( a' \) are taken at the same history \( h \). 
\end{definition}

\begin{theorem}[Bounded Bias via $\varepsilon$-Bisimulation]
Suppose the actions in each cluster are $\varepsilon$-bisimilar. Then,
$\left| \mathbb{E}\left[ \nabla_\theta \log \pi_\theta(a \mid h) (A(h,a) - \tilde{A}(h,a)) \right] \right| \leq O(\varepsilon).$
\end{theorem}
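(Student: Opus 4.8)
The plan is to bound the bias term $|\mathbb{E}[\nabla_\theta \log \pi_\theta(a \mid h)(A(h,a) - \tilde{A}(h,a))]|$ by decomposing it into two pieces: the gap between the true advantage and the cluster-averaged \emph{reward-based} advantage, and then controlling the latter using the $\varepsilon$-bisimulation hypothesis. First I would recall that $\tilde{A}(h,a)$ is, by construction, the average of $R(h_t,a_t)=\gamma^{T-t}R$ over all history-action pairs $(h_t,a_t)$ falling in the same intention cell $(\tilde{h},\tilde{a})$ as $(h,a)$. So $A(h,a) - \tilde{A}(h,a)$ is a weighted average of differences $A(h,a) - A(h',a')$ over pairs $(h',a')$ in the same cell. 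Because $a$ and $a'$ lie in the same cluster, they are $\varepsilon$-bisimilar, so $|r(h,a) - r(h,a')| \le \varepsilon$ and $D_{\mathrm{TV}}(P(\cdot\mid h,a),P(\cdot\mid h,a')) \le \varepsilon$.

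The core step is a standard bisimulation-style argument: the advantage function $A(h,a) = Q(h,a) - V(h)$, and $Q(h,a) = r(h,a) + \gamma\,\mathbb{E}_{h'\sim P(\cdot\mid h,a)}[V(h')]$. For two $\varepsilon$-bisimilar actions at the same history $h$ (so $V(h)$ cancels), $|A(h,a)-A(h,a')| = |Q(h,a)-Q(h,a')| \le |r(h,a)-r(h,a')| + \gamma\,|\mathbb{E}_{h'\sim P(\cdot\mid h,a)}[V(h')] - \mathbb{E}_{h'\sim P(\cdot\mid h,a')}[V(h')]|$. The first term is $\le \varepsilon$ by hypothesis. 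The second term is bounded by $\gamma\,\|V\|_\infty \cdot 2\,D_{\mathrm{TV}}(P(\cdot\mid h,a),P(\cdot\mid h,a')) \le 2\gamma\varepsilon\|V\|_\infty$ by the coupling/variational characterization of total variation. Since rewards (hence returns $\gamma^{T-t}R$) are bounded, $\|V\|_\infty \le R_{\max}/(1-\gamma)$ is a constant, giving $|A(h,a)-A(h,a')| \le \varepsilon(1 + 2\gamma R_{\max}/(1-\gamma)) = O(\varepsilon)$. Averaging over the cell preserves this bound, so $|A(h,a) - \tilde{A}(h,a)| \le O(\varepsilon)$ pointwise. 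Finally, I would pull this out of the expectation: $|\mathbb{E}[\nabla_\theta \log \pi_\theta(a\mid h)(A(h,a)-\tilde{A}(h,a))]| \le \mathbb{E}[|\nabla_\theta \log \pi_\theta(a\mid h)|\cdot|A(h,a)-\tilde{A}(h,a)|] \le O(\varepsilon)\cdot\mathbb{E}[|\nabla_\theta \log \pi_\theta(a\mid h)|]$, and the score-function term has bounded expectation under the standard assumption that $\|\nabla_\theta \log \pi_\theta\|$ is bounded (or has bounded first moment), which is absorbed into the $O(\cdot)$.

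The main obstacle I anticipate is a subtlety in how histories enter the definition: the intention cell groups together pairs $(h',a')$ whose \emph{clustered} histories match $\tilde{h}$, but the $\varepsilon$-bisimulation definition as stated compares $a,a'$ \emph{at the same history $h$}. To close this gap cleanly I would either (i) strengthen the reading of the hypothesis so that histories in the same cluster are also $\varepsilon$-close in the bisimulation sense (then $V(h)$ no longer cancels exactly but contributes another $O(\varepsilon)$ term via an inductive bisimulation bound on $|V(h)-V(h')|$), or (ii) restrict attention to the within-history aggregation where $h$ is fixed and only actions are clustered, which is the cleanest case and likely what the authors intend. A second minor point is handling the temporal discounting: since the assigned reward is $R(h_t,a_t)=\gamma^{T-t}R$ rather than a genuine $Q$-value, I would note that the argument goes through verbatim treating $\gamma^{T-t}R$ as the (Monte Carlo) return estimate, with its conditional expectation playing the role of $Q$, and bounded returns supplying the needed $\|V\|_\infty$ bound.
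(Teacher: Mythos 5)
Your argument is correct and follows essentially the same route as the paper's: $\varepsilon$-bisimulation bounds within-cluster $Q$-value (hence advantage) differences by $O(\varepsilon)$ via the one-step reward gap plus a total-variation term on the transitions, the cluster average therefore deviates from $A(h,a)$ by $O(\varepsilon)$ pointwise, and boundedness of $\nabla_\theta \log \pi_\theta$ transfers this to the expectation. The paper merely asserts the intermediate bound $|Q^\pi(h,a)-Q^\pi(h,a')|\le \tfrac{2\varepsilon}{1-\gamma}$ in a two-line sketch; your explicit decomposition, and your flagging of the mismatch between clustering over histories and the per-history bisimulation definition, supply details the paper omits.
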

\setlength{\baselineskip}{10pt}
\vspace{-15pt}
\begin{proof}
$\varepsilon$-bisimulation ensures that value differences within a cluster satisfy $|Q^\pi(h,a)-Q^\pi(h,a')|\le\tfrac{2\varepsilon}{1-\gamma},$ implying that cluster means differ by at most $O(\varepsilon)$. Since $\nabla \log \pi$ is bounded, the inner product bias is $O(\varepsilon)$.
\end{proof}
\vspace{-10pt}

In summary, by using conditional expectations and variance decomposition, we prove that replacing original advantages $A$ with cluster-averaged advantages $\tilde{A}$ removes the intra-cluster variance $\mathbb{E}[\Var(A \mid C)]$, lowering the total variance of the policy gradient estimate. Provided that the expectation remains approximately unchanged, this variance reduction leads to more stable training and faster convergence. It allows larger optimization steps without divergence and increases the utility of each sample, explaining why cluster-smoothed advantages yield smoother learning curves.

\section{Experiments}
\label{sec:exp}

\subsection{Experimental Setup}
\label{exmperimental_setup}

\paragraph{Baselines}
We select both \textbf{online} and \textbf{offline} methods as baselines. For \textbf{offline methods}, we include: 
\begin{inparaenum}[\it 1)]
\item 
Behavior Cloning (BC) that trains the policy using successful trajectories.
\item 
Trajectory-wise DPO~\cite{song2024trial}, which trains langugae models using successful and failed trajectories.
\item 
Step-wise DPO~\cite{xiong2024watch}, which employs success/failure labels at the action level based on simulation outcomes.
\item 
SPAG~\cite{cheng2024self}, which designs a discounted reward and uses offline PPO~\cite{schulman2017proximal} for optimization of policy gradients. 
\end{inparaenum}
For \textbf{online methods}, we select: 
\begin{inparaenum}[\it 1)]
\item 
Archer~\cite{zhou2024archer}, which utilizes a hierarchical reinforcement learning framework. 
\item 
StarPO~\cite{RAGEN}, which applies GRPO~\cite{shao2024deepseekmath} for policy optimization.
\end{inparaenum}
Implementation details of baselines are in Appendix~\ref{app:baseline}.

\paragraph{Tasks} 
We evaluate \method in both single-agent and adversarial environments (see Appendix~\ref{app:task_details} for details). 
For the \textbf{single-agent} setting, we consider two tasks: 
\begin{inparaenum}[\it 1)]
\item \textbf{Twenty Questions}~\citep{abdulhai2023lmrl}, a dialogue task where the agent plays the role of a guesser, aiming to identify a hidden word selected from a list of 157 candidates by asking up to twenty yes-no questions. 
The Oracle responds with ``Yes'' ``No'' or ``Invalid Question''. The agent receives a final reward $\mathcal{R}=1$ upon correctly guessing the target word, ending the episode; otherwise, the reward remains 0.
\item \textbf{Guess My City}~\citep{abdulhai2023lmrl}, a similar multi-turn task where the agent tries to identify a hidden city from a list of 100 candidates within twenty questions. 
The agent can ask any type of question and receives free-form responses, not limited to yes/no answers.
\end{inparaenum} 
For the \textbf{adversarial} setting, we consider two competitive tasks: 
\begin{inparaenum}[\it 1)]
\item \textbf{Bargaining}~\citep{shapira2024gleeunifiedframeworkbenchmark}, a two-player game where Alice and Bob take turns proposing how to divide a fixed amount of money over a finite time horizon. 
As the game progresses, each player's payoff is discounted by a player-specific discount factor. 
If the game ends without an agreement, both players receive zero payoff. Otherwise, the discounted payoffs for Alice and Bob are given by $p_{A}$ and  $p_{B}$.
\item \textbf{Negotiation}~\citep{shapira2024gleeunifiedframeworkbenchmark}, a two-player task where a seller (Alice) and a buyer (Bob) negotiate the price of a product with a true value. 
Alice and Bob each have subjective valuations.
Over a fixed time horizon, the players alternate offers: at odd stages, Alice proposes a price and Bob decides whether to accept; at even stages, Bob proposes and Alice decides. 
If a price is accepted, the utilities for Alice and Bob are given by $u_A$, $u_B$. 
If no agreement is reached, both receive zero utility.
\end{inparaenum}

\paragraph{Evaluation} 
For the \textbf{single-agent} environments, following ArCHer~\citep{zhou2024archer}, we evaluate \method on a subset of $N$ tasks from Twenty Questions and Guess My City. 
We report the \textbf{average final reward}, defined as
$\frac{1}{N} \sum_{i=1}^{N} \mathbb{I}[R_i = 1]$, where $R_i$ denotes the final reward for the $i$-th trajectory. 
We set $N = 200$ for each environment.
For the \textbf{adversarial} environments, following GLEE~\citep{shapira2024gleeunifiedframeworkbenchmark}, we evaluate \method across 48 game configurations. 
In each configuration, the agent plays as either Alice or Bob against fixed opponents, with each setting repeated $N = 25$ times.
In Bargaining, the goal is to achieve a higher payoff than the opponent. In Negotiation, the objective is to sell at a higher price (as the seller) or buy at a lower price (as the buyer).
We let \method play both roles (Alice and Bob) against various opponents and compute the average win rate for each role, counting each successful completion of the task objective as a win. 
Specifically, the \textbf{average win rate} for Alice in Bargaining is defined as
$W_A = \frac{1}{N} \sum_{i=1}^{N} \mathbb{I}[p_{i,A} > p_{i, B}]$,
where $p_A$ and $p_B$ denote the discounted payoffs for Alice and Bob, respectively. The definition is symmetric for Bob.
For Negotiation, the average win rate for alice is defined as
$W_A = \frac{1}{N} \sum_{i=1}^{N} \mathbb{I}[u_{i, A} > u_{i, B}]$,
where $u_A$ and $u_B$ represent the utilities of Alice and Bob. This is again symmetric for Bob.

\paragraph{Models} 
We use Llama-3-8B-Instruct~\citep{llama3modelcard} as the policy model.
For each language action, we obtain its semantic embedding using text-embedding-3-small~\citep{neelakantan2022textcodeembeddingscontrastive}.
In single-agent environments, Oracle is simulated with GPT-4. 
In adversarial settings, we employ opponent models from different families, including GPT-4o (\texttt{gpt-4o-2024-08-06})~\citep{gpt4}, Claude 3 (\texttt{claude-3-5-sonnet-20240620})~\citep{anthropic2023claude}, and DeepSeek-Chat (\texttt{DeepSeek-V3})~\citep{deepseekai2025deepseekv3technicalreport}.

\paragraph{Implementation Details} 
For each scenario, we gather 1,000 games and update the policy using the trajectories. 
Specifically, in single-agent scenarios, the actor interacts directly with the Oracle (\ie, the environment). 
For adversarial scenarios, we employ self-play to collect competitive interaction data from both players. 
To evaluate whether \method can consistently improve the policy, we perform three iterations. 
In each iteration, we collect another 1,000 games using the updated policy and conduct a new round of training. 
Additional implementation details are provided in Appendix~\ref{app:training_details}.

\subsection{Results}
\label{sec:exp_results}
\paragraph{\method significantly improves policy performance.} 
\newcolumntype{B}{>{\columncolor{lightblue!10}}c}
\newcolumntype{G}{>{\columncolor{green!6}}c}
\newcolumntype{P}{>{\columncolor{purple!3.5}}c}

\begin{table}[t]
\caption{Main results on adversarial games. The best
results are \textbf{bolded}, and the second best ones are \underline{underlined}. The metric is the average win rate.}
\small
\centering
\resizebox{\linewidth}{!}{
\begin{tabular}{lGGGPBBBP} 
\toprule
\multirow{2}{*}{Methods} 
& \multicolumn{4}{c}{Bargaining} 
& \multicolumn{4}{c}{Negotiation} \\
\cmidrule(lr){2-5} \cmidrule(lr){6-9}
& \multicolumn{1}{c}{GPT-4o} & \multicolumn{1}{c}{Deepseek-V3} & \multicolumn{1}{c}{Claude-3.5} & \multicolumn{1}{c}{AVG.}
& \multicolumn{1}{c}{GPT-4o} & \multicolumn{1}{c}{Deepseek-V3} & \multicolumn{1}{c}{Claude-3.5} & \multicolumn{1}{c}{AVG.} \\
\midrule
Vanilla Model& 30.14 & 24.05 & 33.72 & 29.30 &  37.92 & 36.94 & 40.08 & 38.31\\
\cdashlinelr{2-9}
\multicolumn{9}{>{\columncolor{gray!4}}c}{\textit{Offline Baselines}} \\
BC & 46.92 & 40.64 & 55.64 & 47.73 & 31.92 & 40.06 & 32.34 & 34.77\\
Traj-wise DPO & 46.77 & 45.58 & 47.57 & 46.64 & 35.57 & 35.68 & 35.38 & 35.54 \\
Step-wise DPO & 48.91 & 55.48 & 46.00 & 50.13 & 36.33 & 41.56 & 49.17 & 42.35 \\
SPAG & 30.68 & 37.26 & 22.43 &  30.12 &  25.83 & 33.86 & 33.65 & 31.11\\
\cdashlinelr{2-9}
\multicolumn{9}{>{\columncolor{gray!4}}c}{\textit{Online Baselines}} \\
ArCHer & 43.78 & 47.35 & 53.94 & 48.36 & 35.00 & 37.84 & 34.64 & 35.83\\
StarPO & 33.24 & 28.77 & 42.63 & 34.88 & 38.55 & 36.00 & 43.87 & 39.47\\
\cdashlinelr{2-9}
\multicolumn{9}{>{\columncolor{gray!4}}c}{\textit{Ours}} \\
\method ( \textit{Iter 1}) & 51.54 & 55.26 &  52.66 & 53.15 & 45.65 & 42.69 & \underline{49.02} & 45.79 \\
\method ( \textit{Iter 2}) 
& \underline{53.60}	& \textbf{67.33} & \underline{55.62}	& \textbf{58.85}	& \textbf{47.46} & \underline{45.08}	& 48.93	& \underline{47.16} \\
\method ( \textit{Iter 3}) & \textbf{58.66}	& \underline{55.83}	& \textbf{59.55}	& \underline{58.01} & \underline{46.48}	& \textbf{50.50}	& \textbf{49.42}	& \textbf{48.80} \\
\bottomrule
\end{tabular}
}
\label{tab:main_results_multi}
\end{table}
\begin{wraptable}{r}{0.52\textwidth}
\vspace{-13pt}
\setlength\tabcolsep{12.6pt}
\small 
  \centering
    \caption{Main results on single-agent games. The best
results are \textbf{bolded}, and the second-best ones are \underline{underlined}. The metric is the average reward.}
    \vspace{0.1cm}
    \scalebox{0.89}{
    \begin{tabular}{lGBP}
    \toprule
    \multicolumn{1}{l}{Methods} 
    &  \multicolumn{1}{c}{Twenty.} & \multicolumn{1}{c}{Guess.} & \multicolumn{1}{c}{AVG.} \\
    \midrule
    Vanilla Model & 27.50 & 13.50 & 20.50 \\
    \cdashlinelr{2-4}
    \multicolumn{4}{>{\columncolor{gray!4}}c}{\textit{Offline Baselines}} \\
    BC & 27.50 & 5.50  & 16.50  \\
    Traj-wise DPO & 27.00 & 17.50 &  22.25 \\
    Step-wise DPO  & 27.50 & 11.50 & 19.50  \\
    SPAG & 26.50 & 13.00 & 19.75  \\
    \cdashlinelr{2-4}
    \multicolumn{4}{>{\columncolor{gray!4}}c}{\textit{Online Baselines}} \\
    ArCHer & 26.00 & 10.00 & 16.25 \\
    StarPO & 27.50 & 10.50 & 16.00 \\
    \cdashlinelr{2-4}
    \multicolumn{4}{>{\columncolor{gray!4}}c}{\textit{Ours}} \\
    \method \textit{(Iter 1)}	& 28.00  & 29.00 &  28.50  \\
    \method \textit{(Iter 2)}	& \underline{29.50}	& \underline{32.00}	& \underline{30.75} \\
    \method \textit{(Iter 3)}	& \textbf{34.50}	& \textbf{36.00}	& \textbf{35.25}  \\
    \bottomrule
    \end{tabular}
    }
    \vspace{-11pt}
  \label{tab:main_results_single}
\end{wraptable}

As shown in Table~\ref{tab:main_results_multi}, in the adversarial tasks, \method achieves the highest average win rate in both \textit{Bargaining} and \textit{Negotiation}, surpassing offline and online baselines by 9.67\% and 9.83\%, respectively.
Similarly, in the single-agent tasks (Table~\ref{tab:main_results_single}), \method outperforms all baselines by an average of 9.82\%.
Existing offline and online RL methods both rely on action sampling and reward assignment, where agents interact with the environment, collect action samples, and assign rewards to those actions.
This approach works reasonably well in small action spaces, where repeated sampling provides stable and accurate reward estimates.
However, in open-ended language action tasks, where agents act through natural language, the action space grows exponentially to $\mathcal{V^L}$, given a vocabulary of size $\mathcal{V}$ and an average sequence length $\mathcal{L}$.
In such vast spaces, each sample typically receives only a binary reward signal, and the sample size $\mathcal{N}$ is much smaller than the action space, leading to highly sparse and noisy reward signals and making accurate credit assignment challenging.
\method addresses this by introducing reward aggregation in the intention space, which reduces reward variance and significantly improves learning performance.

\paragraph{\method continuously improves policy through iteration.}
After confirming that \method significantly outperforms the baselines, we further investigate its performance under iterative updates.
As shown in Table~\ref{tab:main_results_multi} and Table~\ref{tab:main_results_single}, \method achieves additional gains of 3.27\% and 1.85\% after two and three iterations, respectively.
This suggests that reward aggregation effectively reduces variance while preserving essential discriminative signals for policy learning, reflecting a favorable bias-variance trade-off.
It further enhances sample efficiency and mitigates the risk of premature convergence caused by excessive smoothing, demonstrating that reward aggregation can deliver stable and cumulative performance improvements.

\subsection{Extending to Online \method}
\paragraph{Settings}
We first perform reward aggregation using pre-collected trajectories.
The aggregated rewards are then used to initialize a point-wise reward model (RM), implemented as Llama-3.1-8B-Instruct~\citep{llama3modelcard}, consistent with the policy model.
Subsequently, the policy interacts with the environment to dynamically generate new samples, which are scored by the RM to update the policy.
Additionally, the RM is periodically updated with the latest collected data, allowing it to evolve alongside the policy.
We conduct the online \method on two single-agent games to conveniently observe reward at each iteration.
Detailed parameter settings are provided in Appendix~\ref{app:parameter}.

\paragraph{Results}
As shown in Figure~\ref{fig:online_comparasion}, \method achieves faster reward improvement and consistently higher returns across iterations compared to existing online methods (ArCHer and StarPO).
This improvement stems from two key advantages: \begin{inparaenum}[\it 1)]
\item
Reward aggregation provides an initial dense and low-variance reward signal, accelerating early-stage policy learning.
\item
The dynamic RM update ensures alignment between the reward function and the evolving policy, preventing drift and reward misalignment common in static settings.
\end{inparaenum}
Together, these factors enhance both sample efficiency and reward shaping accuracy, leading to faster and more stable policy improvement.

\begin{figure}
  \centering
  \begin{subfigure}[t]{0.4\linewidth}  \includegraphics[width=\linewidth]{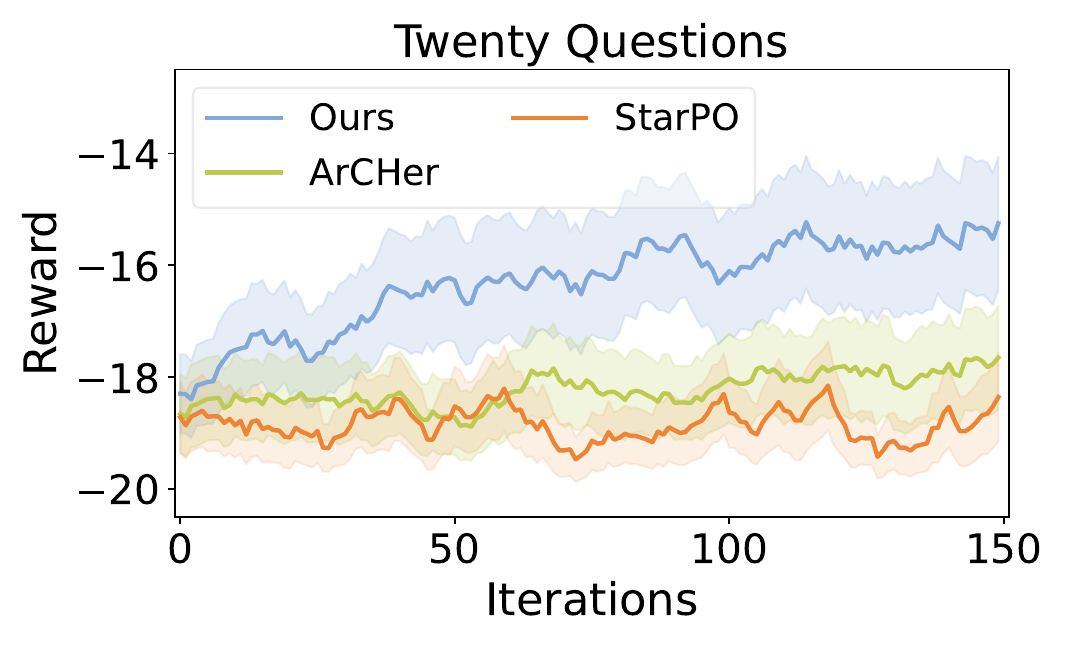}
    \caption{Reward over Iterations on \textit{Twenty Questions}.}
  \end{subfigure}
\hspace{0.8mm}
  \begin{subfigure}[t]{0.4\linewidth}    \includegraphics[width=\linewidth]{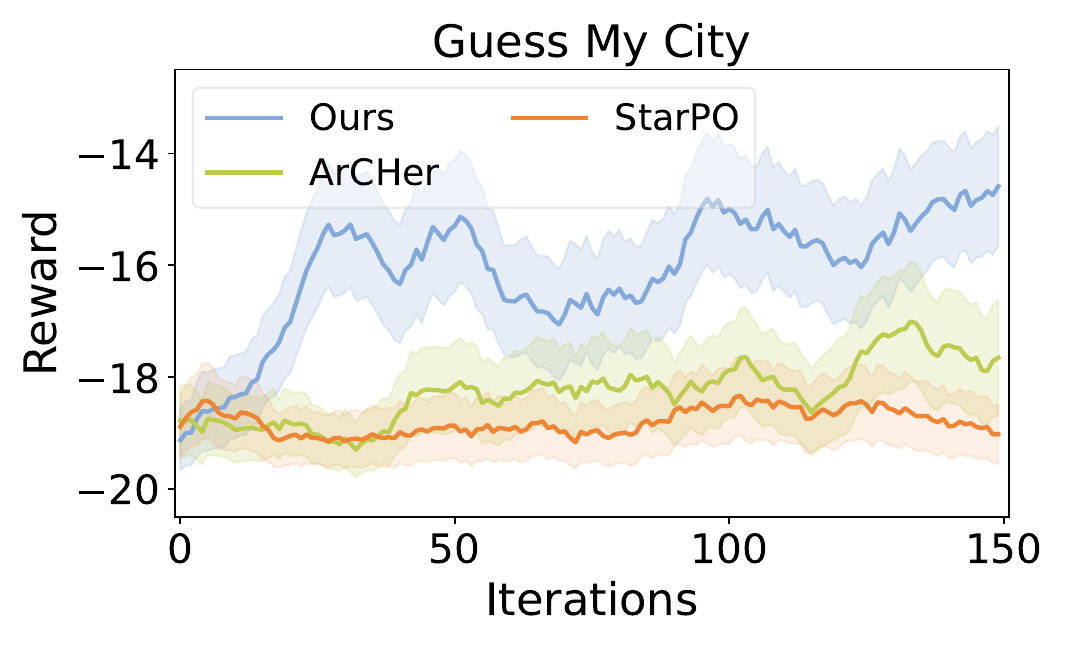}
    \caption{Reward over Iterations on \textit{Guess My City}.}
  \end{subfigure}
  \caption{(a) and (b) show the reward curves of \method and other online methods over iterations on the \textit{Twenty Questions} and \textit{Guess My City} respectively.}
  \vspace{-15pt}
  \label{fig:online_comparasion}
\end{figure}

\section{Analysis}

\subsection{Reward Aggregation Significantly Reduces Reward Variance}

\begin{figure}[ht!]
\vspace{-10pt}
  \centering
  \begin{subfigure}[t]{0.4\linewidth}  \includegraphics[width=\linewidth]{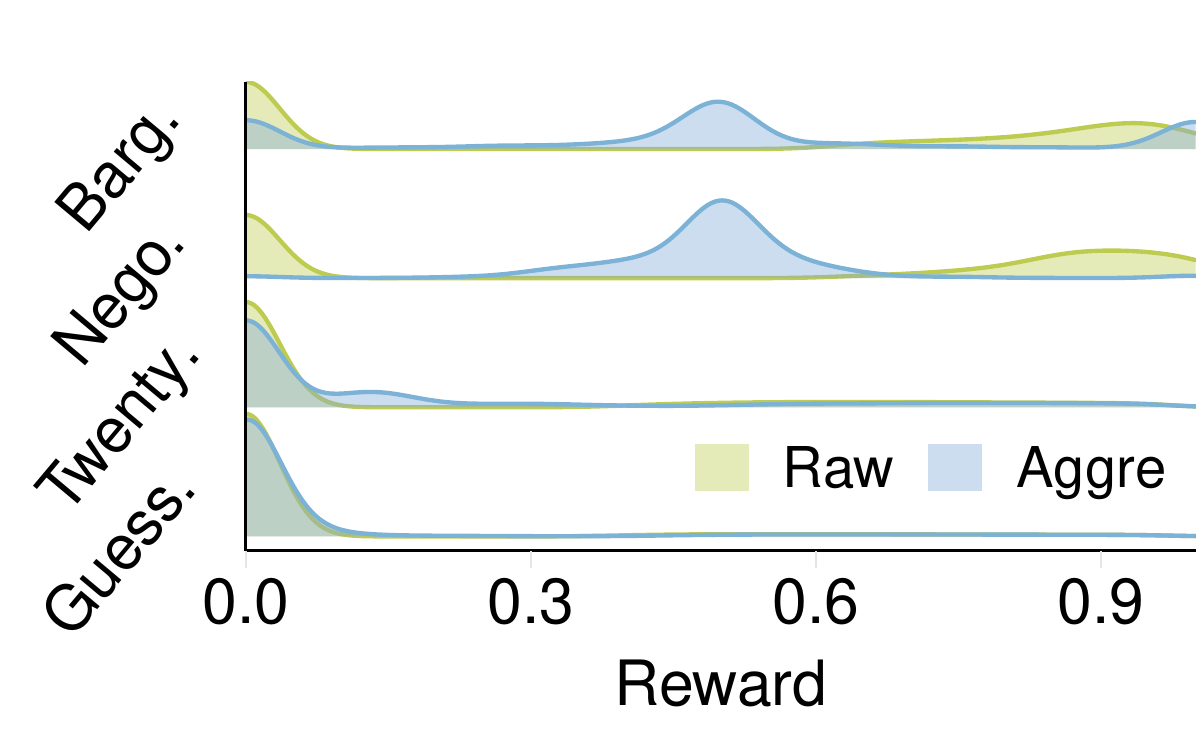}
    \caption{Reward Distribution}
    \label{fig:variance_a}
  \end{subfigure}
\hspace{10mm}
  \begin{subfigure}[t]{0.4\linewidth} \includegraphics[width=\linewidth]{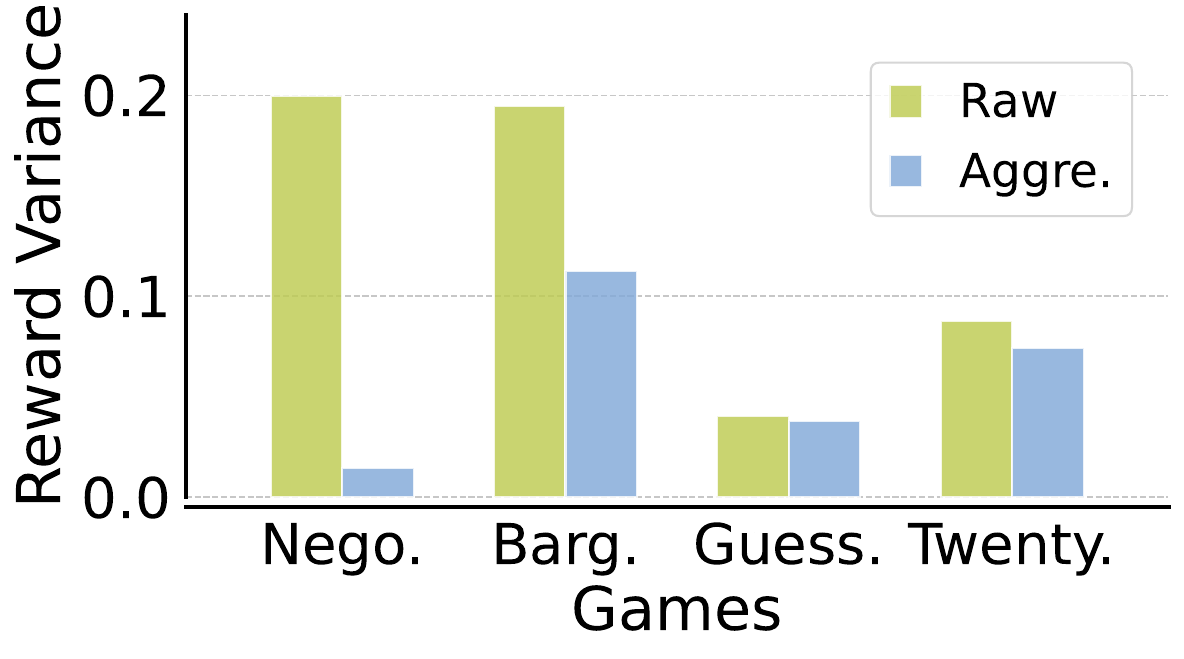}
    \caption{Reward Variance}
    \label{fig:variance_b}
    \vspace{-8pt}
  \end{subfigure}
  \caption{(a) illustrates the distribution of rewards. (b) presents the change in reward variance.}
  \vspace{-5pt}
  \label{fig:variance}
\end{figure}

We show variance change before and after reward aggregation in Figure~\ref{fig:variance}.
As shown in Figure~\ref{fig:variance_a}, reward aggregation markedly reduces the fluctuation range of action rewards. 
The original binary reward distributions are highly polarized, with values mostly concentrated near 0 or 1. 
In a large action space, most actions are sampled only once, and the corresponding binary reward is directly assigned to each action, resulting in high reward variance. 
By contrast, after reward aggregation, actions within the same cluster share a common reward, which significantly smooths the distribution and reduces variance.
Figure~\ref{fig:variance_b} further demonstrates that reward variance decreases across all four tasks, highlighting the effectiveness and necessity of reward aggregation in stabilizing policy learning.

\subsection{Reward Aggregation Improves Policy Optimization}
\renewcommand\thesubfigure{\alph{subfigure}} 
\begin{figure}[ht!]
  \centering
  \begin{subfigure}[t]{0.4\linewidth}    \includegraphics[width=\linewidth]{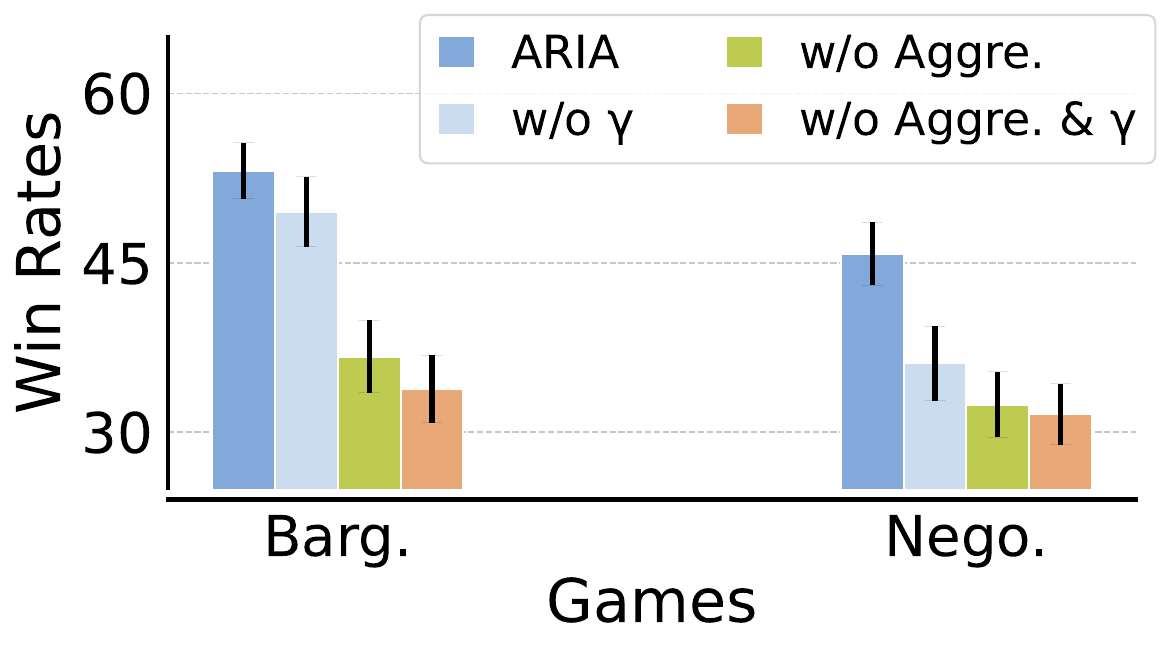}
    \caption{Ablation of \method on win rates}
    \label{fig:ablation_aggr}
  \end{subfigure}
\hspace{8mm}
  \begin{subfigure}[t]{0.4\linewidth}
\includegraphics[width=\linewidth]{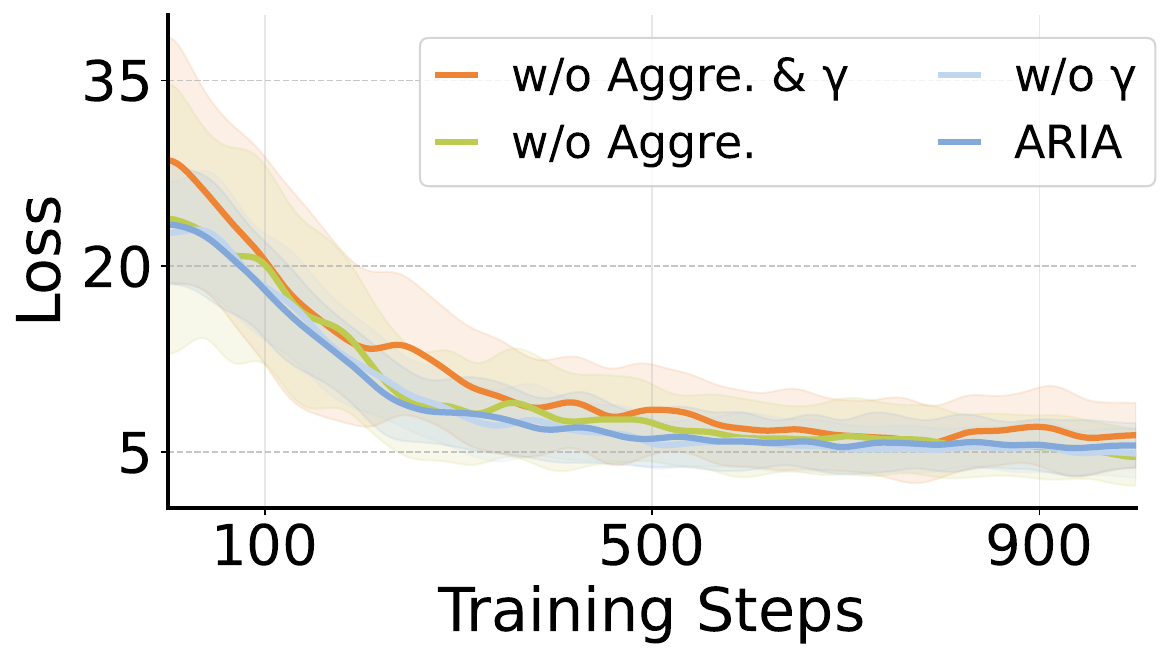}
    \caption{Ablation of \method on loss (smoothed)}
    \label{fig:pgloss}
  \end{subfigure}
  \caption{Ablation of \method. (a) shows win rates on adversarial games and (b) shows training loss curves under different ablation settings in adversarial games.}
  \vspace{-15pt}
  \label{fig:loss_comparison}
\end{figure}

To evaluate whether reward aggregation improves training efficiency, we first compare the policy loss curves under different reward shaping strategies in Figure~\ref{fig:pgloss}.
The results show that \method, which applies semantic-level reward aggregation, accelerates loss reduction compared to the vanilla REINFORCE baseline.
This indicates that shaping the reward through aggregation provides a stronger learning signal, enabling faster policy updates and improved sample efficiency in offline training.
We further observe that, despite converging to similar loss levels, the methods exhibit substantial differences in downstream performance.
As shown in Figure~\ref{fig:ablation_aggr}, \method outperforms other variants by 17.91\% and 13.80\% on the bargaining and negotiation tasks, respectively.
We attribute these gains to the complementary effects of reward decay and reward aggregation:
Reward decay introduces temporal structure that helps assign credit to early-stage actions, but plays a limited role in reducing signal noise.
In contrast, reward aggregation substantially lowers reward variance by assigning shared signals to semantically similar actions, thereby improving the quality of gradient estimation.
This variance reduction enables more stable and efficient optimization and plays a central role in enhancing policy performance in open-ended language action settings.

\subsection{Generalization of \method to Other Models}

\setlength\tabcolsep{13pt}
\begin{table}[ht!]
\small
\vspace{-6pt}
\centering
\caption{\method on Qwen2.5-7B-Instruct and Qwen2.5-1.5B-Instruct.} \vspace{5pt}
\definecolor{ForestGreen}{RGB}{34,139,34}
\begin{tabular}{lGBP} 
\toprule
Methods 
& \cellcolor{gray!0}Bargaining
& \cellcolor{gray!0}Negotiation
& \cellcolor{gray!0}AVG. \\
\midrule
\multicolumn{4}{>{\columncolor{gray!4}}c}{\textit{Qwen2.5-7B-Instruct}} \\
Vanilla & 37.92 & 35.50 & 36.71  \\
\method & 65.96 & 47.06 & 56.51 (\textcolor{ForestGreen!80}{+19.8 $\uparrow$}) \\
\multicolumn{4}{>{\columncolor{gray!4}}c}{\textit{Qwen2.5-1.5B-Instruct}} \\
Vanilla & 0.02 & 18.22 & 9.12 \\
\method & 0.01 & 20.47 & 10.24 (\textcolor{ForestGreen!80}{+1.12 $\uparrow$}) \\
\bottomrule
\end{tabular}
\label{tab:qwen}
\vspace{-3pt}
\end{table}

In Section~\ref{sec:exp_results}, we show that \method achieves significant improvements on Llama3-8B-Instruct.
To further assess the transferability of \method, we apply it to the Qwen models (Qwen2.5-7B-Instruct~\cite{qwen2.5} and Qwen2.5-1.5B-Instruct~\cite{qwen2.5}) and conduct comparative experiments on two adversarial games\footnote{All the settings are the same as those in Section~\ref{sec:exp}.}.
As shown in Table~\ref{tab:qwen}, we observe that altering the base model consistently yields improvements.
This suggests that our reward aggregation approach is model-agnostic and independent of specific architectural features or pretraining data of the underlying language models.
We attribute this generalizability to the shared structural properties in the semantic spaces learned by large-scale language models.
By performing aggregation in the intention space, \method leverages these commonalities to reduce reward variance while preserving task-specific discriminative signals.





\section{Conclusion}
In this paper, we address the core challenges of reinforcement learning in open-ended language action tasks, where agents must operate in \textbf{exponentially large action spaces and learn from sparse, delayed rewards}.
To tackle the resulting high variance in policy optimization, we introduce \textbf{semantic projection}, a novel intention-aware framework that maps natural language actions from the high-dimensional token space into a low-dimensional intention space.
This projection enables reward aggregation across semantically similar actions, effectively densifying sparse rewards and reducing gradient variance.
Built on this idea, we propose \method, which automatically discovers task-specific intention structures via hierarchical clustering and integrates the aggregated rewards into REINFORCE for efficient policy learning.
We further provide a theoretical analysis showing that replacing original advantages with cluster-averaged advantages reduces intra-cluster variance, thereby lowering the overall variance of the policy gradient and improving learning stability.
Extensive experiments across four diverse tasks—including both single-agent and adversarial two-agent games—demonstrate that \method improves training stability, accelerates convergence, and consistently outperforms strong offline and online RL baselines.
Our findings highlight the importance of structure-aware reward shaping in scaling reinforcement learning for language agents in open-ended environments.



\bibliography{custom,anthology}
\bibliographystyle{unsrt}




\newpage

\appendix
\section*{Appendix}
\section{Limitations}
\label{app:limitation}
While \method shows strong performance across various single-agent and adversarial tasks, it relies on clustering in the semantic embedding space to define intention groups, which introduces two limitations.
First, the effectiveness of reward aggregation depends on the quality of sentence embeddings. If embeddings fail to capture fine-grained behavioral differences, clustering may become coarse or misaligned, impairing learning.
Second, the current formulation assumes that intentions are discrete and well-separated. This assumption may not hold in tasks with overlapping goals.
Extending \method to support soft or continuous intention representations and incorporating task-specific structures into the projection process are promising directions for future work.

\section{Broader Impacts}
The contribution of our work lies in the proposed intention-aware reward aggregation framework, which demonstrates a principled and effective approach for training language agents in open-ended language action environments with sparse and delayed rewards.
We focus on tasks such as negotiation, goal-oriented dialogue, and multi-turn interaction, as they reflect real-world scenarios that demand strategic reasoning and adaptive language generation.
Compared to traditional structured tasks with predefined action spaces, these open-ended language interaction tasks better align with human communication dynamics and present a valuable testbed for exploring the cognitive and social capabilities of language agents.

Our method is not limited to the evaluated benchmarks (e.g., \textit{Negotiation}, \textit{Bargaining}, \textit{20 Questions}, \textit{Guess My City}), but can generalize to a broader range of domains involving multi-agent decision-making and goal-driven communication, such as collaborative problem-solving, strategic planning, and educational tutoring systems.
By enhancing the sample efficiency and robustness of reinforcement learning for LLM-based agents, our framework contributes to the development of socially intelligent, general-purpose AI systems that can interact with humans in nuanced and adaptive ways.

\clearpage

\section{Analysis of SplitScore}
\label{app:splitsocre}
In section~\ref{sec:split_score}, we claim that \score is bounded above a monotonically decreasing function. 
When \score consistently falls below a predefined threshold, it indicates that further splits contribute little to the total reward change $\delta_k$. 
In this case, the reward values $\tilde{R}(h_t, a_t)$ remain largely stable, and additional splits are unlikely to affect training outcomes significantly. We provide the detailed explanation as follows.

We begin by recalling the definition:
\begin{equation}
\label{eq:split}
\score(k) = \frac{\delta_k}{|\mathcal{D}|},
\end{equation}
where $\delta_k = \sum_{(h_t,a_t) \in \mathcal{D}} \left| \tilde{R}^{(k+1)}(h_t,a_t) - \tilde{R}^{(k)}(h_t,a_t) \right|$ represents the total absolute change in reward across all $(h_t,a_t)$ when the clustering granularity increases from $k$ to $k+1$. Here, $\mathcal{D}$ denotes the set of all action instances.

We can reformulate Equation~\ref{eq:split} as
\begin{equation*}
\score(k) = \frac{n_k \cdot \bar{\delta}_k}{|\mathcal{D}|},
\end{equation*}
where $\mathcal{D}_k \subseteq \mathcal{D}$ is the set of instances affected by the change in clustering, $n_k = |\mathcal{D}_k|$, and $\bar{\delta}_k = \frac{1}{n_k} \sum_{(h_t,a_t) \in \mathcal{D}_k} \left| \tilde{R}^{(k+1)}(h_t,a_t) - \tilde{R}^{(k)}(h_t,a_t) \right|$ is the average reward change over the affected instances.

\paragraph{Theoretical Boundaries and Edge Cases.}
Given the reward $\tilde{R}(h_t, a_t) \in [0,1]$, it follows that $\bar{\delta}_k \in [0,1]$. 
This leads to the following inequality:

\[
0 \leq \score(k) = \frac{n_k \cdot \bar{\delta}_k}{|\mathcal{D}|} \leq \frac{n_k}{|\mathcal{D}|} \leq \frac{n_{k,\max}}{|\mathcal{D}|},
\]

where $n_k$ is the number of affected instances with $k$ clusters, and $n_{k,\max}$ denotes its maximum possible value. Since hierarchical clustering splits one cluster at a time, the number of affected instances $n_k$ typically decreases as $k$ increases. Therefore, $n_{k,\max}$ is a monotonically decreasing function of $k$, which ensures the convergence of \score.

We further note two edge cases:

\begin{inparaenum}[\it 1\upshape)]
    \item If $n_k = 0$ or $\bar{\delta}_k = 0$, then $\score(k) = 0$, indicating that the split causes no reward change.
    \item If $n_k = n_{k,\max}$ and $\bar{\delta}_k = 1$, the split results in the maximum possible total reward change.
\end{inparaenum}

Therefore, the decay of \score provides a natural criterion for early stopping, as it reflects diminishing changes in the reward signal expressivity with respect to further semantic partitioning.

\section{Algorithm of Hierarchical Agglomerative Clustering}
\label{app:hac}

We illustrate the process of the Hierarchical Agglomerative Clustering (HAC) algorithm in Algorithm~\ref{algo:cluster}.

\begin{algorithm}[h]
\caption{Hierarchical Agglomerative Clustering (HAC) with Average Linkage}
\label{algo:cluster}
\begin{algorithmic}[1]
\REQUIRE Dataset $\mathcal{X} = \{ \boldsymbol{x}_1, \dots, \boldsymbol{x}_n \}$
\ENSURE A dendrogram representing the hierarchy of clusters
\STATE Initialize clusters: $\mathcal{C} \leftarrow \{ \{ \boldsymbol{x}_1 \}, \dots, \{ \boldsymbol{x}_n \} \}$
\WHILE{$|\mathcal{C}| > 1$}
    \STATE Compute pairwise distances using average linkage:
    \[
    (C_p, C_q) = \arg\min_{C_i \ne C_j \in \mathcal{C}} \frac{1}{|C_i||C_j|} \sum_{\boldsymbol{x} \in C_i} \sum_{\boldsymbol{y} \in C_j} \| \boldsymbol{x} - \boldsymbol{y} \|_2
    \]
    \STATE Merge clusters: $C_{\text{new}} \leftarrow C_p \cup C_q$
    \STATE Update cluster set: 
    \[
    \mathcal{C} \leftarrow \left( \mathcal{C} \setminus \{ C_p, C_q \} \right) \cup \{ C_{\text{new}} \}
    \]
\ENDWHILE
\RETURN Deprogram recording the merge steps
\end{algorithmic}
\end{algorithm}

\section{Clustering Metric Calculation Details}
\label{app:clsu_metric}
We use three standard indicators to evaluate clustering performance: the Silhouette Coefficient~\cite{rousseeuw1987silhouettes}, the Calinski-Harabasz Index~\cite{calinski1974dendrite}, and the Davies-Bouldin Index~\cite{davies2009cluster}.

\subsection{Silhouette Coefficient}
The Silhouette Coefficient is a widely used metric for evaluating clustering quality. 
It captures two key aspects: cohesion, which measures how closely related the objects within a cluster are, and separation, which assesses how distinct a cluster is from others.
For each sample $i$, the Silhouette Coefficient $s(i)$ is defined as
\begin{equation*}
s(i) = \frac{b(i) - a(i)}{\max\{a(i), b(i)\}},
\end{equation*}
where $a(i)$ denotes the average distance between $i$ and all other points in the same cluster (intra-cluster distance), and $b(i)$ is the minimum average distance from $i$ to all points in any other cluster, of which $i$ is not a member (nearest-cluster distance).

The value of $s(i)$ ranges from $-1$ to $1$. A value close to $1$ indicates that the sample is well matched to its own cluster and poorly matched to neighboring clusters. A value near $0$ suggests that the sample lies between two clusters. 
A negative value implies potential misclassification, where the sample may have been assigned to the wrong cluster. The overall quality of a clustering configuration can be quantified by the mean Silhouette Coefficient across all samples.

\subsection{Calinski-Harabasz Index}
The Calinski-Harabasz Index (CHI) evaluates clustering quality based on the principle that good clusters should be compact and well separated. Given a clustering result with $k$ clusters and $n$ total samples, CHI is defined as
\[
\text{CHI} = \frac{\text{Tr}(B_k)}{\text{Tr}(W_k)} \cdot \frac{n - k}{k - 1},
\]
where $\text{Tr}(B_k)$ is the trace of the between-cluster dispersion matrix, which measures the distance of each cluster center from the overall mean, and $\text{Tr}(W_k)$ is the trace of the within-cluster dispersion matrix, indicating the compactness of each cluster.

A higher CHI value suggests better-defined clusters, with dense intra-cluster groupings and well-separated inter-cluster distances. This metric is particularly effective when the number of clusters $k$ is known or fixed.

\subsection{Davies-Bouldin Index}
The Davies-Bouldin Index (DBI) is an internal metric for evaluating clustering quality. 
It measures the average similarity between each cluster and its most similar one, combining both intra-cluster compactness and inter-cluster separation. 
Given a clustering result with $k$ clusters, DBI is defined as
\begin{equation*}
\text{DBI} = \frac{1}{k} \sum_{i=1}^{k} \max_{j \ne i} \left( \frac{S_i + S_j}{M_{ij}} \right),
\end{equation*}
where $S_i$ is the average distance between each point in cluster $i$ and its centroid (i.e., intra-cluster dispersion), and $M_{ij}$ is the distance between the centroids of clusters $i$ and $j$ (i.e., inter-cluster separation). The term inside the maximum quantifies the similarity between clusters $i$ and $j$.

A lower DBI indicates better clustering, as it reflects compact, well-separated clusters. This index is particularly useful for comparing the quality of different clustering results on the same dataset.

\clearpage
\section{Illustration of Results after Semantic Projection.}
\label{app:semantic}
\begin{figure}[t!]
    \centering
    \includegraphics[width=1\linewidth]{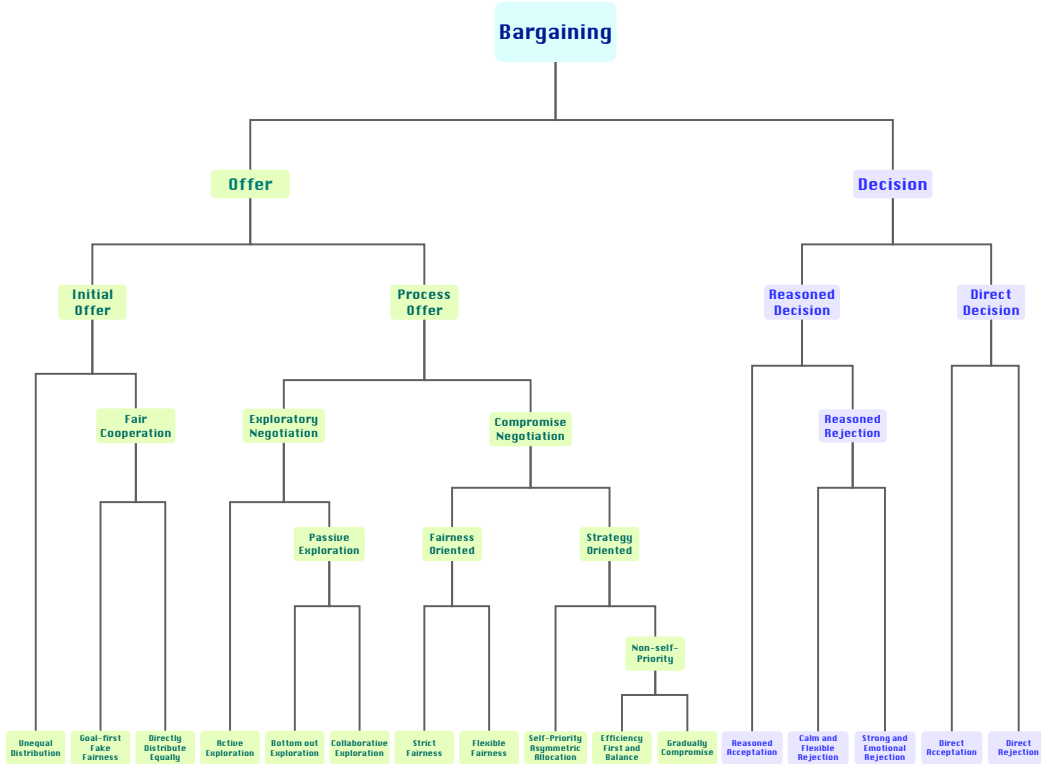}
    \caption{Tree-like clustering result example of bargaining.}
    \label{fig:tree}
\end{figure}

We conduct a preliminary analysis of the action categories derived from semantic clustering. 
Specifically, we select 1,000 gameplay trajectories from Bargaining scenarios and apply hierarchical clustering on the extracted actions, setting the number of clusters $k = 16$. 
To gain deeper insights into the clustering structure and semantics of each category, we utilize GPT-4o to extract representative features from the actions within each cluster. 
This process allows us to identify shared characteristics within individual clusters and perform comparative analysis across clusters, thereby facilitating a comprehensive understanding of the entire hierarchical structure, as illustrated in Figure ~\ref{fig:tree}.
Our analysis follows three main steps:

\begin{inparaenum}[\it 1\upshape)]
\item \textbf{Intra-cluster Feature Extraction}: For each of the 16 clusters, we input the corresponding actions into GPT-4o, leveraging its strong semantic reasoning capabilities to extract the common features. 
These distilled features serve as the basic descriptors for the leaf nodes in Figure~\ref {fig:tree}.

\item \textbf{Comparative Analysis}: To refine these descriptors, we perform pairwise comparisons between sibling clusters that share the same parent node in the hierarchy. GPT-4o is used to analyze the semantic differences between such pairs, filtering out redundant or overlapping traits and preserving only the core distinguishing features.

\item \textbf{Hierarchical Backtracking and Merging}: After characterizing all leaf-level clusters, we recursively merge sibling nodes to form higher-level categories. 
At each level of merging, we repeat the previous two steps, feature extraction and comparative analysis, to summarize semantic attributes at internal nodes. 
This iterative bottom-up process enables us to construct a layered interpretation of the entire clustering tree.
\end{inparaenum}

As shown in Figure ~\ref{fig:tree}, at the top level, the actions are divided into two major phases: Offer and Decision, reflecting the progression of bargaining interactions. The Offer phase is further decomposed into subcategories such as Initial Offer, Exploratory Negotiation, and Compromise Negotiation, capturing different negotiation strategies ranging from fairness-oriented to strategically self-serving. 
The Decision phase includes Reasoned and Direct responses, distinguishing between deliberative and immediate choices.
\clearpage

\section{Proof of Lemma}
In this section, we give detailed proof of the Lemma in Section~\ref{sec:theory}.
\label{app:proof}
\begin{lemma}
Let $\tilde{A}$ denote the aggregated advantage, then $\mathrm{Var}(\tilde{A}) \leq \mathrm{Var}(A)$.
\end{lemma}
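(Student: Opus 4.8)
The statement to prove is $\Var(\tilde{A}) \leq \Var(A)$, where $\tilde{A}$ is obtained from $A$ by replacing each value with the average over its cluster. The natural tool here is the law of total variance conditioned on the cluster assignment. Let $C = c_k(h,a)$ denote the (random) cluster label of a sampled history-action pair. The plan is to view $\tilde{A}$ as the conditional expectation $\tilde{A} = \mathbb{E}[A \mid C]$: indeed, the cluster-averaged reward assigned to every pair in cluster $C$ is exactly the mean of $A$ over that cluster, which is the conditional expectation of $A$ given $C$ (under the empirical/sampling distribution over pairs).

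\textbf{Key steps.} First I would make precise the probability space: a pair $(h,a)$ is drawn according to the distribution induced by the trajectory sampling, and $C$ is the deterministic function $c_k$ of that pair, so $\tilde{A} = \mathbb{E}[A \mid C]$ holds by definition of the aggregation. Second, apply the law of total variance:
\begin{equation*}
\Var(A) = \mathbb{E}\bigl[\Var(A \mid C)\bigr] + \Var\bigl(\mathbb{E}[A \mid C]\bigr) = \mathbb{E}\bigl[\Var(A \mid C)\bigr] + \Var(\tilde{A}).
\end{equation*}
Third, observe that $\mathbb{E}[\Var(A \mid C)] \geq 0$ since it is an expectation of (nonnegative) variances, hence $\Var(\tilde{A}) = \Var(A) - \mathbb{E}[\Var(A \mid C)] \leq \Var(A)$. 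This also identifies the gap exactly as the intra-cluster variance $\mathbb{E}[\Var(A \mid C)]$, matching the discussion in the summary paragraph of Section~\ref{sec:theory}.

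\textbf{Main obstacle.} The calculation itself is routine once the setup is fixed; the only real subtlety — and the place I would be careful — is justifying the identity $\tilde{A} = \mathbb{E}[A \mid C]$ rather than some reweighted average. This requires that the expectation defining $\Var$ is taken over the same distribution over $(h,a)$ pairs that the cluster averages are computed against (i.e. uniform over $\mathcal{D}$, or the sampling distribution). If the paper's aggregation uses a plain unweighted mean over $\mathcal{S}_{\tilde h,\tilde a}$ while $\Var$ is defined with respect to a non-uniform distribution, one needs either to assume these coincide or to phrase the result with respect to the empirical distribution over collected pairs; I would state this alignment explicitly as the one assumption making the conditional-expectation interpretation valid, after which the law of total variance closes the argument immediately.
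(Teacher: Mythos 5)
Your proof is correct and follows essentially the same route as the paper's: identify $\tilde{A}$ with the conditional expectation $\mathbb{E}[A \mid C]$, apply the law of total variance, and drop the nonnegative intra-cluster term $\mathbb{E}[\Var(A \mid C)]$. Your additional care about the distribution under which the cluster averages equal the conditional expectation is a point the paper leaves implicit, but it does not change the argument.
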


\begin{proof}
Let $C$ denote the chosen cluster under granularity $k$.
By the law of total variance, we have
$$
\mathrm{Var}(A) = \mathbb{E}\left[\mathrm{Var}(A \mid C)\right] + \mathrm{Var}\left(\mathbb{E}[A \mid C]\right).
$$
Since $\mathbb{E}_{C}\left[\mathrm{Var}(A \mid C)\right] \geq 0$, it follows that
$$\mathrm{Var}(\tilde{A}) = \mathrm{Var}\left(\mathbb{E}[A \mid C]\right) = \mathrm{Var}(A) - \mathbb{E}\left[\mathrm{Var}(A \mid C)\right] \leq \mathrm{Var}(A).$$
\end{proof}
Intuitively, replacing each trajectory's advantage with the cluster average filters out intra-cluster noise, leading to a more stable estimate.
We then show that replacing the original advantage $A$ with the aggregated advantage $\tilde{A}$ reduces the variance of the policy gradient estimator.

\begin{lemma}
Given the single-sample policy gradient estimator $\nabla_\theta \log \pi_\theta(a \mid h) \, A(h,a)$, the variance is reduced when using the aggregated advantage $\tilde{A}$. Specifically,
$\mathrm{Var}(\nabla_\theta \log \pi_\theta \cdot \tilde{A}) \leq \mathrm{Var}(\nabla_\theta \log \pi_\theta \cdot A).$
\end{lemma}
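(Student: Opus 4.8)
The plan is to reduce the claim about the variance of the \emph{vectorized} estimator $\nabla_\theta \log \pi_\theta(a\mid h)\,A(h,a)$ to the scalar variance statement of the preceding lemma, using the conditional structure induced by the cluster assignment $C = c_k(\cdot)$. First I would fix the policy $\pi_\theta$ and condition on $C$: note that, by construction, $\tilde A(h,a) = \mathbb{E}[A \mid C]$ is measurable with respect to $C$, i.e. it is constant within each cluster. The key observation is that $\tilde A$ is the $L^2$-projection of $A$ onto the $\sigma$-algebra generated by $C$, so the ``residual'' $A - \tilde A$ has conditional mean zero given $C$.

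The main steps, in order, are as follows. (1) Write $g = \nabla_\theta \log \pi_\theta(a\mid h)$ (a random vector) and decompose $gA = g\tilde A + g(A-\tilde A)$. (2) Apply the law of total variance (in its matrix/trace form, summing over coordinates) to $\mathrm{Var}(gA)$, conditioning on the pair $(h, C)$ — or, more cleanly, on $C$ together with whatever additional conditioning makes $g$ behave predictably. The cross term $\mathrm{Cov}\bigl(g\tilde A,\ g(A-\tilde A)\bigr)$ needs to be handled: conditioning on $(h,a)$ up to the cluster, $\tilde A$ is fixed and $g$ depends only on $(h,a)$, so one wants $\mathbb{E}[A - \tilde A \mid h, C] = 0$. (3) Conclude that $\mathrm{Var}(gA) = \mathrm{Var}(g\tilde A) + \mathbb{E}\bigl[\|g\|^2 \mathrm{Var}(A\mid h, C)\bigr] \geq \mathrm{Var}(g\tilde A)$, since the second term is a nonnegative scalar (an expectation of a product of a nonnegative quantity $\|g\|^2$ and a nonnegative conditional variance). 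Taking traces throughout turns the covariance matrix inequality into the stated scalar variance inequality.

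\textbf{Main obstacle.} The delicate point is the vanishing of the cross term. This requires that, conditioned on the cluster label $\tilde a = c_k(a)$ (and the history, or its cluster projection), the score $\nabla_\theta \log \pi_\theta(a\mid h)$ and the centered advantage $A - \tilde A$ are uncorrelated — which is exactly true if we condition on enough of $(h,a)$ that $g$ is deterministic, but then one must be careful that $\tilde A = \mathbb{E}[A\mid C]$ is still the right conditional object. I would resolve this by conditioning on $C$ (the cluster of $(h,a)$) and using the tower property: $\mathbb{E}[\,\langle g\tilde A, g(A-\tilde A)\rangle\,] = \mathbb{E}\bigl[\tilde A\,\langle g, g\rangle\,(A - \tilde A)\bigr]$ — but since within a cluster $A$ itself varies while $g$ also varies, strictly one needs $\mathbb{E}[(A-\tilde A)\,\|g\|^2 \mid C]$ rather than $\mathbb{E}[(A-\tilde A)\mid C]\cdot(\text{const})$. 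The clean fix is to define $\tilde A$ as the cluster average weighted appropriately, or — following the paper's evident intent — to treat $\tilde A$ as a \emph{fixed relabeling} (deterministic function of the cluster) and invoke Lemma~4.1 coordinate-wise after noting $\|\nabla_\theta\log\pi_\theta\|$ is bounded, so that $\mathrm{Var}(g\tilde A) \le \mathrm{Var}(gA)$ follows from $\mathrm{Var}(\tilde A)\le \mathrm{Var}(A)$ together with a Cauchy--Schwarz / boundedness argument controlling the $g$-factor. I would present the conditional-variance decomposition as the primary argument and flag the boundedness of the score as the technical assumption that makes the cross term and the coordinate-wise reduction go through.
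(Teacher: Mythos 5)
Your proposal follows essentially the same route as the paper's proof: both decompose the estimator around the cluster-conditional mean and reduce the claim to the nonnegativity of an expected squared residual. The paper writes the second-moment difference directly as
\[
\mathbb{E}\bigl[(\nabla_\theta\log\pi_\theta)^2A^2\bigr]-\mathbb{E}\bigl[(\nabla_\theta\log\pi_\theta)^2\tilde A^2\bigr]=\mathbb{E}\bigl[\mathbb{E}\bigl[(\nabla_\theta\log\pi_\theta)^2(A-\tilde A)^2\mid C\bigr]\bigr]\ge 0,
\]
and implicitly takes the first moments $\mathbb{E}[\nabla_\theta\log\pi_\theta\cdot A]$ and $\mathbb{E}[\nabla_\theta\log\pi_\theta\cdot\tilde A]$ to coincide. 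Expanding $A^2-\tilde A^2=(A-\tilde A)^2+2\tilde A(A-\tilde A)$ shows that this identity holds only if the cross term $\mathbb{E}\bigl[(\nabla_\theta\log\pi_\theta)^2\,\tilde A\,(A-\tilde A)\bigr]$ vanishes, i.e., only if $\mathbb{E}\bigl[(\nabla_\theta\log\pi_\theta)^2(A-\tilde A)\mid C\bigr]=0$ --- which is precisely the delicate point you isolate. So your diagnosis is correct, and the paper's own proof silently assumes this condition rather than resolving it. Where your proposal falls short is the proposed repair: boundedness of the score plus Cauchy--Schwarz does not yield $\mathrm{Var}(g\tilde A)\le\mathrm{Var}(gA)$ from $\mathrm{Var}(\tilde A)\le\mathrm{Var}(A)$; a bounded $g$ suitably anticorrelated with $A$ within clusters can make $\mathrm{Var}(gA)$ small while leaving $\mathrm{Var}(g\tilde A)$ large, so the coordinate-wise reduction to Lemma~4.1 is not a proof. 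The fixes that actually close the gap are the ones you mention only in passing: either assume the score is conditionally uncorrelated with the residual given the cluster (e.g., because $\nabla_\theta\log\pi_\theta$ is approximately constant on each cluster, in the spirit of the $\varepsilon$-bisimulation assumption used later in the paper), or redefine $\tilde A$ as the $(\nabla_\theta\log\pi_\theta)^2$-weighted cluster average, which makes the cross term vanish identically. Either should be stated as an explicit hypothesis; with one of them in place, your conditional-variance decomposition and the paper's second-moment computation become the same correct argument.
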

\begin{proof}
The variance of the single-sample policy gradient estimator can be written as
\[
\mathrm{Var}(\nabla_\theta \log \pi_\theta \cdot A) = \mathbb{E}\left[(\nabla_\theta \log \pi_\theta)^2 A^2\right] - \left( \mathbb{E}[\nabla_\theta \log \pi_\theta \cdot A] \right)^2.
\]
Replacing $A$ with a constant $\tilde{A}$ within each cluster leads to the following decomposition:
\[
\mathbb{E}\left[(\nabla_\theta \log \pi_\theta)^2 A^2\right] - \mathbb{E}\left[(\nabla_\theta \log \pi_\theta)^2 \tilde{A}^2\right]
= \mathbb{E}\left[ \mathbb{E}\left[ (\nabla_\theta \log \pi_\theta)^2 (A - \tilde{A})^2 \mid C \right] \right] \geq 0.
\]
Therefore,
$$
\mathrm{Var}(\nabla_\theta \log \pi_\theta \cdot \tilde{A}) \leq \mathrm{Var}(\nabla_\theta \log \pi_\theta \cdot A).
$$
\end{proof}

\clearpage
\section{Task Details}
\label{app:task_details}
\paragraph{20 Questions (Twenty Questions)~\citep{abdulhai2023lmrl}}  
This game evaluates an agent’s ability to gather information and reason about an unknown object based on limited data. One participant (the oracle) selects an object, while the other (the guesser) attempts to identify it by asking a series of yes/no questions. In our setting, the GPT-4o serves as the oracle, and the agent’s goal is to develop an effective questioning policy to identify the object within a fixed number of turns. This setup assesses both the agent’s reasoning abilities and its semantic understanding of the objects involved.

\paragraph{Guess My City~\citep{abdulhai2023lmrl}}  
This more complex game involves two participants: the oracle, who is associated with a specific city, and the guesser, who attempts to determine the oracle’s hometown. Unlike \textit{20 Questions}, the guesser can pose both yes/no and open-ended questions, enabling richer and more informative exchanges. This task challenges the agent’s strategic planning and language comprehension, requiring it to generate meaningful questions that elicit valuable clues and increase its likelihood of correctly identifying the city.

\paragraph{Bargaining~\citep{shapira2024gleeunifiedframeworkbenchmark}}
This is a two-player game where Alice and Bob take turns proposing how to divide a fixed amount of money $M$ over a finite time horizon $T$. As the game progresses, each player's payoff is discounted by a player-specific discount factor, $\delta_A$ for Alice and $\delta_B$ for Bob. 
The outcome of the game is denoted by a pair $(t_{\text{ev}}, p_{\text{ev}})$, where $t_{\text{ev}}$ indicates the round at which the game terminates, and $p_{\text{ev}}$ represents the share of $M$ that Alice receives (before applying discounting). If the game ends without an agreement, we set $t_{\text{ev}} = \infty$, and both players receive zero payoff. Otherwise, the discounted payoffs are given by $p_{A} = \delta_A^{t_{\text{ev}}-1} p_{\text{ev}}$ and $p_{B}=\delta_B^{t_{\text{ev}}-1} (1 - p_{\text{ev}})$.

\paragraph{Negotiation~\citep{shapira2024gleeunifiedframeworkbenchmark}}
This is a two-player task where a seller (Alice) and a buyer (Bob) negotiate the price of a product with a true value $V$. 
Alice and Bob each have subjective valuations, $V_A$ and $V_B$, respectively. 
Over a fixed time horizon $T$, the players alternate offers: at odd stages, Alice proposes a price and Bob decides whether to accept; at even stages, Bob proposes and Alice decides. 
If a price $p$ is accepted, the utilities are $u_A = p - V_A$ for Alice and $u_B = V_B - p$ for Bob. If no agreement is reached, both receive zero utility.

\section{Implementation details}
\label{app:training_details}
\subsection{Baselines}
\label{app:baseline}
To ensure a fair comparison, all methods are trained using the same amount of data.
For offline methods, we collect 1,000 trajectories in the \textbf{single-agent} scenario and 2,000 trajectories in the \textbf{adversarial} scenario, corresponding to 1,000 games where both Alice and Bob contribute 1,000 trajectories each. Models are trained for three epochs on a combined dataset consisting of two tasks from the same category (single-agent or adversarial).

For online methods, we perform 150 iterations in both scenarios. In each iteration, we conduct 32 games in the single-agent setting and 32 self-play games in the adversarial setting. 
For ArCHer and online \method, the final reward of each collected trajectory is distributed across steps, and models are updated at the utterance level in each iteration. For RAGEN(GRPO), we group trajectories into four groups, compute the advantage for each group, and perform trajectory-level updates.
All experiments are conducted using 8 NVIDIA A100-80GB GPUs.

\subsection{Parameter Design}

\label{app:parameter}
\begin{table}[h] 
\caption{Hyperparameters for All Experiments} 
\small
\centering
\setlength\tabcolsep{15pt}
\begin{tabular}{c|c|cc} 
\toprule
& & Adversarial & Single-Agent\\
\hline
\multirow{4}{4em}{BC} & actor lr & 2e-5 & 2e-5 \\
& batch size& 32 & 16 \\
& number of epoch & 3 & 3 \\
& cutoff length & 4096 & 4096 \\
\hline
\multirow{5}{4em}{Trajectory-wise DPO} & actor lr & 2e-5 & 2e-5 \\
& kl coefficient &0.2 & 0.2 \\
& batch size& 16 & 16 \\
& number of epoch & 3 & 3 \\
& cutoff length & 4096 & 4096 \\
\hline
\multirow{4}{4em}{Step-wise DPO} & actor lr & 2e-5 & 2e-5 \\
& batch size& 32 & 16 \\
& number of epoch & 3 & 3 \\
& cutoff length & 4096 & 4096 \\
\hline
\multirow{4}{4em}{SPAG} & actor lr & 2e-5 & 2e-5 \\
& batch size & 32 & 16 \\
& number of epoch & 3 & 3 \\
& cutoff length & 4096 & 4096\\
\hline
\multirow{9}{4em}{ArCHer} 
& rollout trajectories & 32 & 32 \\
& replay buffer size & 10000 & 10000 \\
& actor lr & 3e-6 & 3e-6 \\
& critic lr & 6e-5 & 6e-5 \\
& batch size & 64 & 64 \\
& critic updates per iteration & 50 & 50 \\
& actor updates per iteration & 10 & 10 \\
& warm up iters with no actor update & 10 & 10 \\
& iteration & 150 & 150 \\
\hline
\multirow{5}{4em}{StarPO} 
& rollout trajectories & 32 & 32 \\
& group size & 8 & 4 \\
& actor lr & 3e-6 & 3e-6 \\
& batch size & 32 & 32 \\
& iteration & 150 & 150 \\
\hline
\multirow{4}{4em}{\method (Offline)} 
& actor lr & 2e-5 & 2e-5 \\
& batch size & 32 & 16 \\
& number of epoch & 3 & 3 \\
& cutoff length & 4096 & 4096 \\
\hline
\multirow{5}{4em}{\method (Online)} 
& rollout trajectories & 32 & 32 \\
& actor lr & 3e-6 & 3e-6 \\
& batch size & 64 & 64 \\
& actor updates per iteration & 10 & 10 \\
& iteration & 150 & 150 \\
\hline
\multirow{5}{4em}{Reward Model} 
& lr & 2e-5 & 2e-5 \\
& batch size & 64 & 64 \\
& number of epoch & 3 & 3 \\
& update & per 50 steps & per 50 steps \\
& cutoff length & 4096 & 4096 \\
\toprule
\end{tabular}
\label{table: hyperparameters_combination_lock}
\end{table}

As described in \S~\ref{exmperimental_setup}, we train the actor model using both online and offline methods. 
We use the parameter efficient finetuning technique, specifically LoRA (Target $q_{proj},k_{proj},v_{proj}$, rank=8, $\alpha$=16). 
The hyperparameter configurations for all experiments are detailed in Table~\ref{table: hyperparameters_combination_lock}.

\clearpage

\section{Ablation on the Threshold $\epsilon$}
\label{app:threshold}
\begin{wraptable}{r}{0.6\textwidth}
\footnotesize
\vspace{-15pt}
\centering
\caption{Ablation of Threshold $\epsilon$.}
\setlength\tabcolsep{2.5pt}
\begin{tabular}{lGBP} 
\toprule
\footnotesize
Methods 
& \cellcolor{gray!0}Bargaining
& \cellcolor{gray!0}Negotiation
& \cellcolor{gray!0}AVG.
\\
\midrule
\method($\gamma=0.01$) & \textbf{53.15} & \textbf{45.79} & \textbf{49.47} \\
\  \w $\gamma=0.1$ & 43.86 & 38.02 & 40.91 (\textcolor{red!50}{-8.56 $\downarrow$}) \\
\  \w $\gamma=0.001$ & 46.63 & 35.77 & 41.20 (\textcolor{red!50}{-8.27 $\downarrow$}) \\
\bottomrule
\end{tabular}
\label{tab:ablation_thre}
\vspace{-5pt}
\end{wraptable}
We conduct an ablation study to examine the effect of different thresholds $\epsilon$ for \score on performance. 
Specifically, we compare $\epsilon = 0.1$, which corresponds to bargaining with $k=4$ clusters and negotiation with $k=2$, and $\epsilon = 0.001$, which corresponds to both bargaining and negotiation with $k=100$. 
As shown in Table~\ref{tab:ablation_thre}, a larger $\epsilon$ results in coarser reward aggregation, potentially assigning the same reward to actions with different semantics, which degrades performance. 
Conversely, a smaller $\epsilon$ causes overly fine-grained aggregation, making the reward signal too sparse for effective learning, which also harms performance. 
Therefore, we set $\epsilon = 0.01$ for all experiments.

\section{Statistical Significance of Experiments}

We perform statistical significance testing to assess the effectiveness of \method compared to each baseline on two multi-agent tasks: Bargaining and Negotiation. 
For each baseline, we report the mean performance, the t-value, and the p-value from a paired t-test comparing \method against the baseline.
As shown in Table~\ref{tab:pvalue}, \method consistently outperforms all baselines across both tasks. 
The improvements are statistically significant ($p \!<\!0.05$) in all cases, demonstrating that \method provides meaningful gains over existing offline and online approaches.
\setlength\tabcolsep{11pt}
\begin{table}[t]
\small
\centering
\caption{The results of significant tests.}
\label{tab:pvalue}
\begin{tabular}{lcccccc}
\toprule
\multirow{2}{*}{\textbf{Methods}} & \multicolumn{3}{c}{\textbf{Bargaining}} & \multicolumn{3}{c}{\textbf{Negotiation}} \\
\cline{2-7}
& \textbf{Win Rate} & \textbf{t-value} & \textbf{p-value} & \textbf{Win Rate} & \textbf{t-value} & \textbf{p-value} \\
\midrule
Vanilla Model & 29.30 & 15.53 & $<$0.001 & 38.31 & 5.92 & $<$0.001 \\
\multicolumn{7}{>{\columncolor{gray!6}}c}{\textit{Offline Baselines}} \\
BC        & 47.73 & 1.6711 & 0.0475 & 34.77 & 9.91 & $<$0.001 \\
Traj-wise DPO       & 46.64 & 2.99 & 0.0013 & 35.54 & 9.60 & $<$0.001 \\
Step-wise DPO        & 50.13 & 2.60 & 0.0047 & 42.35 & 3.00 & 0.0014 \\
SPAG       & 30.12 & 14.56 & $<$0.001 & 31.11 & 12.67 & $<$0.001 \\
\multicolumn{7}{>{\columncolor{gray!6}}c}{\textit{Online Baselines}} \\
ArCHer     & 48.36 & 1.6463 & 0.0499 & 35.83 & 7.85 & $<$0.001 \\
StarPO       & 34.88 & 11.5321 & $<$0.001 & 39.47 & 4.66 & $<$0.001 \\
\multicolumn{7}{>{\columncolor{gray!6}}c}{\textit{Ours}}\\
ARIA   & 53.15 & -- & -- & 45.79 & -- & -- \\
\bottomrule
\end{tabular}
\end{table}

\section{Case Study}
We evaluate the performance of agents trained by \method in both single-agent (\textit{Twenty Questions}, \textit{Guess My City}) and multi-agent (\textit{Bargaining}, \textit{Negotiation}) scenarios. In the single-agent tasks, the agent successfully completes \textit{Twenty Questions} and \textit{Guess My City} within 5 and 9 turns, respectively. For the multi-agent settings, the ARIA-trained agent plays the role of Bob, while Alice is simulated by GPT-4o. In both \textit{Bargaining} and \textit{Negotiation} tasks, the agent consistently adopts effective strategies to maximize its gains.
\lstset{
    backgroundcolor=\color[RGB]{245,245,244},
    breaklines=true,
    xleftmargin=5pt, 
    xrightmargin=5pt, 
    breakindent=0pt,
    basicstyle=\ttfamily\small,
    frame=trbl,
    frameround = tttt,
    emph={Actor, Oracle},
    emphstyle={\bfseries\color{brown}}
}
\begin{lstlisting}[caption={Case of Twenty Questions},label=listing:case_20q]
Actor: Is it a living thing? Oracle: Yes.
Actor: Is it a mammal? Oracle: Yes.
Actor: Is it a human? Oracle: No.
Actor: Is it a carnivore? Oracle: No.
Actor: Is it a cow? Oracle: Yes.
\end{lstlisting}
\clearpage

\lstset{
    backgroundcolor=\color[RGB]{245,245,244},
    breaklines=true,
    xleftmargin=5pt, 
    xrightmargin=5pt, 
    breakindent=0pt,
    basicstyle=\ttfamily\small,
    frame=trbl,
    frameround = tttt,
    emph={Actor, Oracle},
    emphstyle={\bfseries\color{brown}}
}
\begin{lstlisting}[caption={Case of Guess My City},label=listing:case_guess]
Actor: Is your city the capital of a country?
Oracle: Yes, it is located in the capital of a country.
Actor: Is the country you're thinking of located in Europe?
Oracle: No, the country is not located in Europe.
Actor: Is the country you're thinking of located in North America?
Oracle: No, it is not located in North America.
Actor: Is the country you're thinking of located in South America?
Oracle: No, the country is not located in South America.
Actor: Is the country you're thinking of located in Africa?
Oracle: No, the country is not located in Africa.
Actor: Is the country you're thinking of located in Asia?
Oracle: Yes, it is located in Asia.
Actor: Is your city located in China?
Oracle: No, it is not located in China.
Actor: Is your city located in the country of Japan?
Oracle: Yes, it is located in the country of Japan.
Actor: Is your city Tokyo?. 
Oracle: Yes.
\end{lstlisting}
\lstset{
    backgroundcolor=\color[RGB]{245,245,244},
    breaklines=true,
    xleftmargin=5pt, 
    xrightmargin=5pt, 
    breakindent=0pt,
    basicstyle=\ttfamily\small,
    frame=trbl,
    frameround = tttt,
    emph={Alice, Bob, Task, Description},
    emphstyle={\bfseries\color{brown}}
}
\begin{lstlisting}[caption={Case of Bargaining},label=listing:case_bargain]
Task Description:
In this game, there are two players. You are playing as Bob. You and Alice need to divide 100$ between you. In each round of the game, one of the players will make a proposal for division and can describe their proposal in a few words. The other player will decide whether to accept the offer and finish the game, or to proceed to the next round, in which they will be the one making the proposal. Your goal in this game is to achieve more money than Alice. Beware of inflation! With each passing round, the money is worth 0% less for you. For Alice, the money is worth 20% less. You have 20 rounds to divide the money, or both of you will get nothing!

Alice: I'm starting strong, let's start with 60-40 split!
Bob: I propose a more balanced split, considering the inflation effect on your side. This offer is fair and beneficial for both of us. What about a 55-45 split?
Alice: I'm willing to compromise, but I want a better share. I propose 70-30 Split!
Bob: Alice, consider that with each round, inflation affects your gains more than mine. This 60-40 split is actually more favorable to you than it appears due to the inflation factor. It's in both our interests to reach an agreement soon.
Alice: I'm not willing to accept a 60-40 split, even with the inflation factor. I'll try to negotiate a better deal. What about 65-35 Split?
Bob: Alice, let's be practical. With the current inflation rate, your $42 is actually worth about $17.15 in original value, while my $58 remains at $58. This split gives us both a fair share considering the inflation impact. We should agree soon to minimize your losses due to inflation.
Alice: I accept with the 42-58 split.
\end{lstlisting}
\clearpage
\lstset{
    backgroundcolor=\color[RGB]{245,245,244},
    breaklines=true,
    xleftmargin=5pt, 
    xrightmargin=5pt, 
    breakindent=0pt,
    basicstyle=\ttfamily\small,
    frame=trbl,
    frameround = tttt,
    emph={Alice, Bob, Task, Description},
    emphstyle={\bfseries\color{brown}}
}
\begin{lstlisting}[caption={Case of Negotiation},label=listing:case_nego]
Task Description: 
You are playing as Bob. You are looking for a product that worth no more then $120 to you. Alice is a seller trying to sell you the product. The product is worth $100 to Alice. Alice will offer you a price to buy the product. You can either accept the offer or reject it. Your goal is to buy some product but save as much money as you can. If you reject the offer, you can make a counteroffer to buy the product. Alice can either accept or reject your counteroffer. If Alice rejects your counteroffer, Alice can make a new counteroffer, and so on. 
You have 20 rounds to close the deal. However, at any moment, you can choose to stop the negotiation with Alice and buy the product from another seller, John, who is willing to sell the product to you for $120.

Alice: Hello Bob, I have a fantastic product that I believe would be perfect for you. I'm offering it at a competitive price of $125. This is a great deal considering the value it provides. Would you be interested in purchasing it at this price?
Bob: I'm interested in the product, but I think $125 is a bit steep for me. I'm willing to meet you halfway. I'd like to make a counteroffer of $105. Can you consider that?
Alice: Thank you for your offer, Bob. I appreciate your interest, but I'm afraid I can't go that low. This product is of high quality and value. How about we meet in the middle? I can offer it to you for $121.25. This is a significant reduction from my initial offer and represents excellent value for the product.
Bob: I appreciate your willingness to negotiate, but I'm still not comfortable with the price. I'd like to make a counteroffer of $109. I'm willing to meet you halfway, but I need to stay within my budget. Can you consider that?
Alice: I accept the offer.
\end{lstlisting}
\clearpage
\section{Prompts}
\lstset{
    backgroundcolor=\color[RGB]{245,245,244},
    breaklines=true,
    xleftmargin=5pt,
    xrightmargin=5pt,
    breakindent=0pt,
    basicstyle=\ttfamily\small,
    frame=trbl,
    frameround = tttt,
    emph={Bargaining, System, Prompt, Negotiation, Twenty, Questions, Guess, My, City, Seller, Buyer},
    emphstyle={\bfseries\color{brown}}
}
\begin{lstlisting}[caption={System Prompt details for Single-Agent Scenatio (i.e., Twenty Questions, Guess My City).},label=listing:prompt_single]
Twenty Questions System Prompt:
Let's play a game of Twenty Questions.
In each round, you will ask me a yes/no question to guess the object I'm thinking of. Keep asking until you guess the correct object.

Your question must be a yes/no question and follow this format (Do not add anything else!):  
Question: <your question>

For example:  
Question: Is it a fruit?
Question: Is it an animal?

{history}  

Now, please proceed by asking your next question.

Guess My City System Prompt:
Let's play a game of Guess My City.
In each round, you must ask me a question to guess the city I'm thinking of. Continue asking until you guess the correct city.

Your question format must **follow this format exactly**: 
Question: <your question>

For example: 
Question: What is the primary language spoken in your city?
Question: Is your city located on the east coast of the country?

{history}

Now, please proceed by asking your next question.
\end{lstlisting}
\clearpage
\lstset{
    backgroundcolor=\color[RGB]{245,245,244},
    breaklines=true,
    xleftmargin=5pt,
    xrightmargin=5pt,
    breakindent=0pt,
    basicstyle=\ttfamily\small,
    frame=trbl,
    frameround = tttt,
    emph={Bargaining, System, Prompt, Negotiation, Twenty, Questions, Guess, My, City, Seller, Buyer},
    emphstyle={\bfseries\color{brown}}
}
\begin{lstlisting}[caption={System Prompt details for Multi-Agent Scenatio (i.e., Bargaining, Negotiation).},label=listing:multi]
Bargaining System Prompt:
Let's play a game.
In this game, there are two players. You are playing as {player_name}.
You and {rival_name} need to divide {money_to_divide}$ between you.
In each round of the game, one of the players will make a proposal for division and can describe their proposal in a few words.
The other player will decide whether to accept the offer and finish the game, or to proceed to the next round, in which they will be the one making the proposal.
Your goal in this game is to achieve more money than {rival_name}.
We will relay messages between you and {rival_name}.
{inflation_message}
{max_rounds_message}

Negotiation System Prompt (Seller):
You are Alice. You are selling one product that is worth no less then ${self_final_value} to you.
Bob is a potential buyer to whom you are offering the product. The product is worth ${rival_value} to Bob. 
You will offer Bob to buy the product at a price of your choice. Bob can either accep dt or reject the offer.
Your goal is to earn as much money as you can for the product.
If Bob rejects the offer, he can make a counteroffer to buy your product. You can either accept or reject his counteroffer. If you reject Bob's counteroffer, you can make a new counteroffer, and so on.
You have 20 rounds to close the deal. However, at any moment, you can choose to stop the negotiation with Bob and sell the product to another buyer, John, who is willing to buy the product from you for ${self_final_value}.


Negotiation System Prompt (Buyer):
You are playing as Bob. You are looking for a product that worth no more then ${self_final_value} to you.
Alice is a seller trying to sell you the product.
Bob will offer you a price to buy the product. You can either accept the offer or reject it.
Your goal is to buy some product but save as much money as you can.
If Alice rejects the offer, he can make a counteroffer to buy your product. 
You can either accept or reject his counteroffer. 
If you reject Alice's counteroffer, you can make a new counteroffer, and so on.
\end{lstlisting}
\end{document}